\renewcommand{\emph}{\textit}
\newcommand{\R}{{R}}
\definecolor{darkblue}{rgb}{0.0,0.0,0.5}
\newcommand{\w}{{\boldsymbol{w}}}
\renewcommand{\a}{\boldsymbol{a}}
\renewcommand{\b}{\boldsymbol{b}}
\renewcommand{\u}{\boldsymbol{u}}
\renewcommand{\k}{k}
\newcommand{\D}{D}
\renewcommand{\H}{{H}}
\newcommand{\Hilb}{{\mathcal{H}}}
\renewcommand{\P}{{P}}
\renewcommand{\v}{\boldsymbol{v}}
\newcommand{\x}{{\boldsymbol{x}}}
\newcommand{\y}{\boldsymbol{y}}
\renewcommand{\a}{{\boldsymbol{a}}}
\newcommand{\E}{\mathbb{E}}
\newcommand{\X}{\mathcal{X}}
\newcommand{\loss}{l}
\newcommand{\vtheta}{{\boldsymbol{\theta}}}
\newcommand{\vsigma}{{\boldsymbol{\sigma}}}
\newcommand{\one}{\mathbf{1}}
\newcommand{\zero}{\mathbf{0}}
\newcommand{\smallspace}{{\hspace{2pt}}}
\newcommand{\verysmallspace}{{\hspace{2pt}}}
\newcommand{\HS}{{\rm HS\xspace}}
\newcommand{\spec}{{\rm spec\xspace}}
\newcommand{\norm}[1]{\left\Vert#1\right\Vert}
\newcommand{\iprod}[1]{\left\langle#1\right\rangle}
\newcommand{\abs}[1]{\left\vert#1\right\vert}
\newtheorem*{corollary*}{Corollary}
\DeclareMathOperator*{\argmin}{argmin\xspace}
\DeclareMathOperator*{\tr}{tr\xspace}
\newcommand{\paren}[1]{\left(#1\right)}
\newcommand{\inner}[1]{\left\langle#1\right\rangle}
\newcommand{\set}[1]{\left\{#1\right\}}
\newcommand{\cH}{{\mathcal{H}}}
\newcommand{\cX}{{\mathcal{X}}}
\newtheorem*{asspu}{Assumption {\bf (U)} (no-correlation)}
\begin{document} 
\title{\textbf{The Local Rademacher Complexity of $\ell_p$-Norm Multiple Kernel Learning}}
\author{\textbf{Marius Kloft}\footnote{A Part of the work was done while MK was at Learning Theory Group, Computer Science Division and Department of Statistics, University of California, Berkeley, CA 94720-1758, USA.}}
\affil{Machine Learning Laboratory\\Technische Universit\"at Berlin\\Franklinstr. 28/29\\10587 Berlin, Germany\\ {mkloft@mail.tu-berlin.de}}
\author{\textbf{Gilles Blanchard}}
\affil{Department of Mathematics\\   University of Potsdam\\    Am Neuen Palais 10\\  14469 Potsdam, Germany\\ {gilles.blanchard@math.uni-potsdam.de}}
\maketitle´


\begin{abstract}%
We derive an upper bound on the local Rademacher complexity of $\ell_p$-norm multiple kernel learning, which yields a tighter excess risk bound than global approaches. Previous local approaches aimed at analyzed the case $p=1$ only while our analysis covers all cases $1\leq p\leq\infty$, assuming the different feature mappings corresponding to the different kernels to be uncorrelated. 
We also show a lower bound that shows that the bound is tight,
and derive consequences regarding excess loss, namely
fast convergence rates of the order
$O(n^{-\frac{\alpha}{1+\alpha}})$, where $\alpha$ is the minimum eigenvalue decay rate of the individual kernels.

\end{abstract} 

\begin{keywords}
  multiple kernel learning, learning kernels, generalization bounds, local Rademacher complexity
\end{keywords}

\section{Introduction}

Propelled by the increasing ``industrialization'' of modern application domains such as bioinformatics or computer vision leading to the accumulation of vast amounts of data, the past decade experienced a rapid professionalization of machine learning methods. Sophisticated machine learning solutions such as the support vector machine can nowadays almost completely be applied out-of-the-box \citep{Weka}. Nevertheless, a displeasing stumbling block towards the complete automatization of machine learning remains that of finding the best abstraction or \emph{kernel} for a problem at hand.

In the current state of research,  there is little hope that a machine will be able to find automatically---or even engineer---the best kernel for a particular problem \citep{searle}. However, by restricting to a less general problem, namely to a finite set of base kernels the algorithm can pick from, one might hope to achieve automatic kernel selection: clearly, cross-validation based model selection \citep{Stone1974} can be applied if the number of base kernels is decent. Still, the performance of such an algorithm is limited by the performance of the best kernel in the set.

In the seminal work of \cite{LanCriGhaBarJor04} it was shown that it is computationally feasible to simultaneously learn a support vector machine \emph{and} a linear combination of kernels at the same time, if we require the so-formed kernel combinations to be positive definite and trace-norm normalized. Though feasible for small sample sizes, the computational burden of this so-called \emph{multiple kernel learning} (MKL) approach is still high. By further restricting the multi-kernel class to only contain convex combinations of kernels, the efficiency can be considerably improved, so that ten thousands of training points and thousands of kernels can be processed \citep{SonRaeSchSch06}.

However, these computational advances come at a price. Empirical evidence has accumulated showing that sparse-MKL optimized kernel combinations rarely help in practice and
frequently are to be outperformed by a regular SVM using an unweighted-sum kernel $K =\sum_m K_m$ \citep{WSNips,GehNow09},
leading for instance to the provocative question ``Can learning kernels help performance?''\citep{Cortes2009}. 

By imposing an $\ell_q$-norm, $q \geq 1$, rather than an $\ell_1$ penalty on the kernel combination coefficients, MKL was finally made useful for practical applications and profitable (Kloft et al., 2009, 2011)\nocite{KloBreSonZieLasMue09,KloBreSonZie2011}. The $\ell_q$-norm MKL is an empirical minimization algorithm that operates on the multi-kernel class consisting of functions $f:x\mapsto\langle\w,\phi_\k(x)\rangle$ with $\norm{\w}_{k}\leq D$, where $\phi_k$
is the kernel mapping into the reproducing kernel Hilbert space (RKHS) $\Hilb_k$ with kernel $k$ and norm $\norm{.}_k$, while the 
kernel $k$ itself ranges over the set of possible kernels $\big\{ \k=\sum_{m=1}^M\theta_m\k_m ~\Big|~  \Vert\vtheta\Vert_q\leq 1,~\vtheta\geq 0 \big\}$.  

In Figure~\ref{fig:tss}, we reproduce exemplary results taken from Kloft et al.~(2009, 2011) (see also references therein for further
evidence pointing in the same direction). We first observe that,
as expected, $\ell_q$-norm MKL enforces strong sparsity in the coefficients $\theta_m$ when $q=1$, and no sparsity at all for $q=\infty$, which corresponds to the SVM with an unweighted-sum kernel, while intermediate values of $q$ enforce different degrees of soft sparsity (understood as the steepness of the decrease of the ordered coefficients $\theta_m$). Crucially, the performance (as measured by the AUC criterion) is not monotonic as a function of $q$;
$q=1$ (sparse MKL) yields significantly worse performance than $q=\infty$ (regular SVM with sum kernel), but optimal performance is attained for some intermediate value of $q$. This is an empirical strong motivation to study theoretically the performance of $\ell_q$-MKL beyond the limiting cases $q=1$ or $q=\infty$.

\begin{figure}[t]
  \centering
  \hspace{-0.6cm}
  \includegraphics[width=0.6\textwidth]{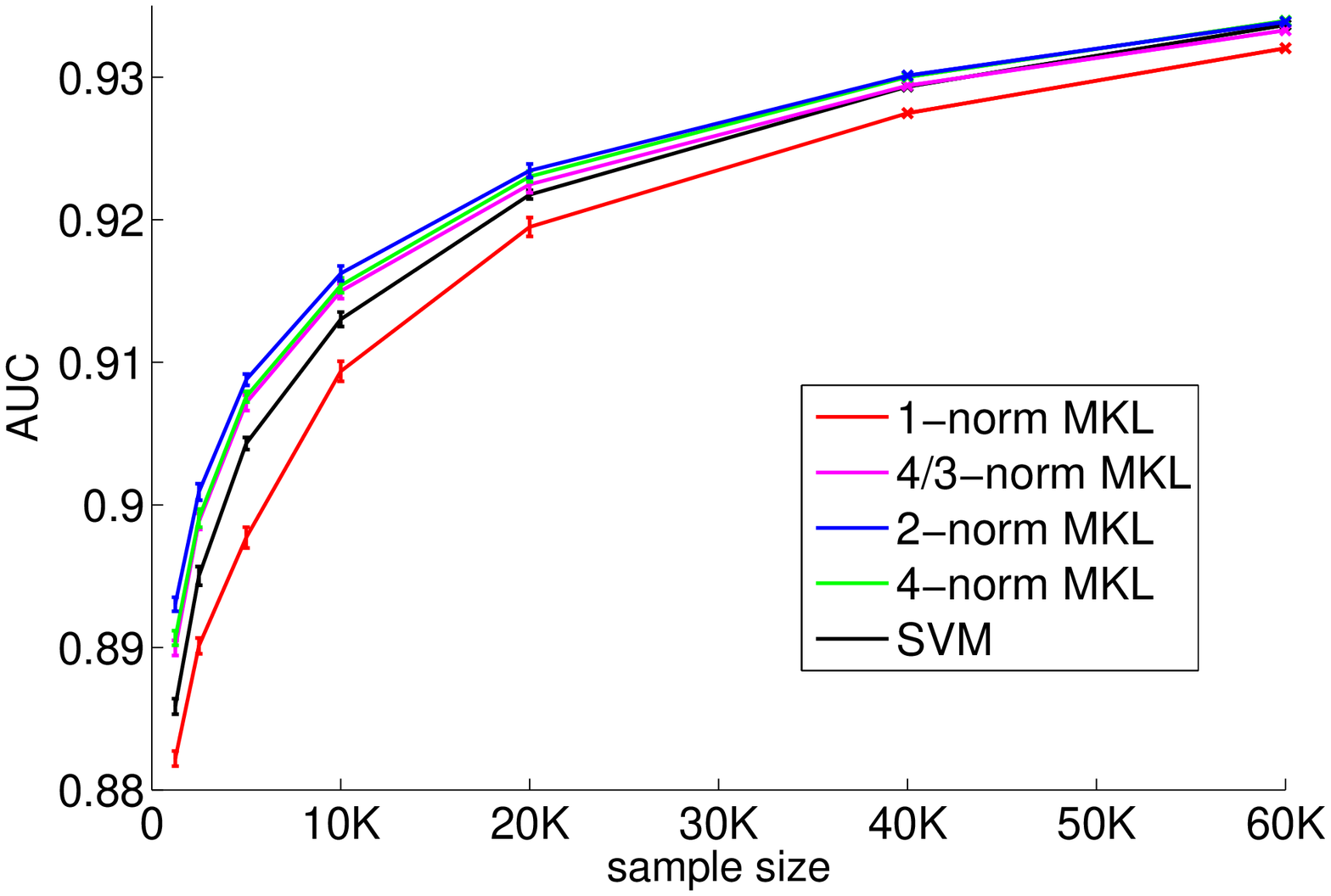}
  \hspace{-0.75cm}
  \includegraphics[width=0.45\textwidth]{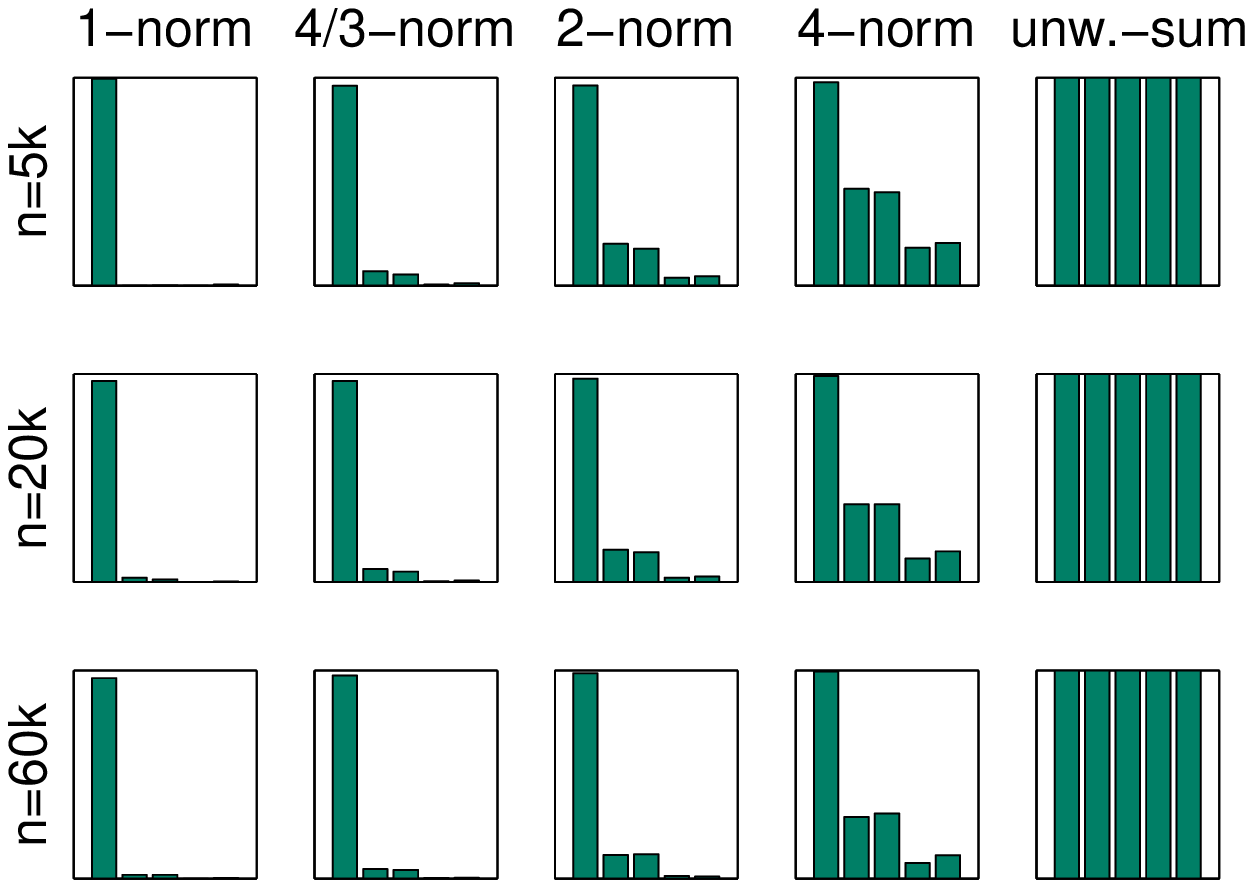}
  \caption{\label{fig:tss} Splice site detection experiment in Kloft et al. (2009, 2011). {\sc Left:} The Area under ROC curve 
    as a function of the training set size is shown. The regular SVM is equivalent to $q=\infty$ (or $p=2$). {\sc Right:} The optimal kernel weights $\theta_m$ as output by $\ell_q$-norm MKL are shown.}
\end{figure}

A conceptual milestone going back to the work of \cite{BacLanJor04} and \cite{MicPon05} is that the above multi-kernel class can equivalently be represented as a block-norm regularized linear class in the product Hilbert space $\Hilb:=\Hilb_1\times\cdots\times\Hilb_M$, where $\Hilb_m$ denotes the RKHS
associated to kernel $k_m$, $1\leq m \leq M$. More precisely, denoting by $\phi_m$ the kernel feature mapping associated to kernel $k_m$ over input space $\cX$, and 
$\phi : x \in \cX \mapsto (\phi_1(x),\ldots,\phi_M(x)) \in \Hilb$, the class of functions defined above coincides with
\begin{equation}
\label{eq:defclass}
\H_{p,\D,M} = \big\{f_{\w}:x\mapsto \inner{\w,\phi(x)} ~\big|~ \w=(\w^{(1)},\ldots,\w^{(M)}), \norm{\w}_{2,p}\leq \D \big\},
\end{equation}
where there is a one-to-one mapping of $q\in[1,\infty]$ to $p\in[1,2]$ given by $p=\frac{2q}{q+1}$. The $\ell_{2,p}$-norm is defined here as 
$ \big\Vert\w\big\Vert_{2,p}:=\big\Vert\big(\Vert\w^{(1)}\Vert_{k_1},\ldots,\Vert\w^{(M)}\Vert_{k_M}\big)\big\Vert_{p}=
\big(\sum_{m=1}^M \big\Vert \w^{(m)}\big\Vert_{k_m}^p \big)^{1/p}$; for simplicity, we will frequently write $\norm{\w^{(m)}}_2=\norm{\w^{(m)}}_{\k_m}$.

Clearly, learning the complexity of \eqref{eq:defclass} will be greater than one that is based on a single kernel only. However, it is unclear whether the increase is decent or considerably high and---since there is a free parameter $p$---how this relates to the choice of $p$. To this end the main aim of this paper is to analyze the sample complexity of the above hypothesis class \eqref{eq:defclass}. An analysis of this model, based on global Rademacher complexities, was developed by \cite{CorMohRos10}. In the present work, we base our main analysis on the theory of \emph{ local} Rademacher complexities, which allows to derive improved and more precise rates of convergence.

\paragraph{Outline of the contributions.}
This paper makes the following contributions:
\begin{itemize}
\item Upper bounds on the local Rademacher complexity of $\ell_p$-norm MKL are shown, from which we derive an excess risk bound that achieves a fast convergence rate of the order
$O(n^{-\frac{\alpha}{1+\alpha}})$, where $\alpha$ is the minimum eigenvalue decay rate of the individual kernels (previous bounds for $\ell_p$-norm MKL only achieved $O(n^{-\frac{1}{2}})$.
\item A lower bound is shown that beside absolute constants matches the upper bounds, showing that our results are tight.
\item The generalization performance of $\ell_p$-norm MKL as guaranteed by the excess risk bound is studied for varying values of $p$, shedding light on the appropriateness of a small/large $p$ in various learning scenarios.
\end{itemize}
Furthermore, we also present a simpler proof of the global Rademacher bound shown in \cite{CorMohRos10}. A comparison of the rates obtained with local and global Rademacher analysis, respectively, can be found in Section \ref{sec:disc-comp}.

\paragraph{Notation.}
For notational simplicity we will omit feature maps and directly view $\phi(x)$ and $\phi_m(x)$ as random variables $\x$ and $\x^{(m)}$ taking values in the Hilbert space $\cH$ and $\cH_m$, respectively, where  $\x=(\x^{(1)},\ldots,\x^{(M)})$.
Correspondingly, the hypothesis class we are interested in reads $ \H_{p,\D,M} = \big\{f_\w:\x\mapsto \inner{\w,\x} ~\big|~ \norm{\w}_{2,p}\leq \D \big\}.$ If $\D$ or $M$ are clear from the context, we sometimes synonymously denote $\H_p=\H_{p,\D}=\H_{p,\D,M}$. 
We will frequently use the notation $(\u^{(m)})_{m=1}^M$ for the element $\u=(\u^{(1)},\ldots,\u^{(M)}) \in \Hilb = \Hilb_1 \times \ldots \times \Hilb_M$.

We denote the kernel matrices corresponding to $\k$ and $\k_m$ by $K$ and $K_m$, respectively. Note that we are considering normalized kernel Gram matrices, i.e., the $ij$th entry of $K$ is $\frac{1}{n}k(\x_i,\x_j)$.
We will also work with covariance operators in Hilbert spaces. In a finite dimensional vector space, the (uncentered) covariance operator can be defined 
in usual vector/matrix notation as $\E \x\x^\top $. Since we are working with potentially infinite-dimensional vector spaces, we will use instead of 
$\x\x^\top$ the tensor notation $\x \otimes \x \in \HS(\Hilb)$, which is a Hilbert-Schmidt operator $\Hilb \mapsto \Hilb$ defined as  $(\x \otimes \x)\u = \inner{\x,\u}\x$. The space $\HS(\cH)$ of Hilbert-Schmidt operators on $\cH$  is itself a Hilbert space, and the expectation $\E \x \otimes \x$ is well-defined and belongs to $\HS(\cH)$
as soon as $\E \norm{\x}^2$ is finite, which will always be assumed (as a matter of fact, we will often assume that $\norm{\x}$ is bounded a.s.). We denote by $J=\E \x \otimes \x$, $J_m=\E \x^{(m)} \otimes \x^{(m)}$ the uncentered covariance operators corresponding to variables $\x$, $\x^{(m)}$; it holds that $\tr(J) = \E \norm{\x}^2_2$ and $\tr(J_m) = \E \norm{\x^{(m)}}^2_2$.

Finally, for $p\in[1,\infty]$ we use the standard notation $p^*$ to denote the conjugate of $p$, that is, $p^*\in[1,\infty]$ and $\frac{1}{p} + \frac{1}{p*} =1$.

\section{Global Rademacher Complexities in Multiple Kernel Learning}

We first review global Rademacher complexities (GRC) in multiple kernel learning. Let $\x_1,\ldots,\x_n$ be an i.i.d. sample drawn from $\P$. The global Rademacher complexity is defined as $\R(\H_p)=\E\sup_{f_\w\in\H_p} \langle\w,\frac{1}{n}\sum_{i=1}^n\sigma_i\x_i\rangle$, where $(\sigma_i)_{1\leq i\leq n}$ is an i.i.d. family (independent of $(\x_i)$ ) of Rademacher variables (random signs). Its empirical counterpart is denoted by
$\widehat{\R}(\H_p)=\E\big[\R(\H_p)\big|\x_1,\ldots,\x_n\big]=\E_\vsigma\sup_{f_\w\in\H_p} \langle\w,\frac{1}{n}\sum_{i=1}^n\sigma_i\x_i\rangle$. The interest in the global Rademacher complexity comes from that if known it can be used to bound the generalization error \citep{Kol01,BarMen02}. 

\smallskip \noindent In the recent paper of \cite{CorMohRos10} it was shown using a combinatorial argument that the empirical version of the global Rademacher complexity can be bounded as
$$ \widehat{\R}(\H_{p})\leq \D\smallspace\sqrt{\frac{cp^*}{n}\Big\Vert\big(\tr(K_m)\big)_{m=1}^M\Big\Vert_{\frac{p^*}{2}}} ,$$
where $c=\frac{23}{22}$ and $\tr(K)$ denotes the 
trace of the kernel matrix $K$. We will now show a quite short  proof of this result and then present a bound on the population version of the GRC.  The proof presented here is based on the Khintchine-Kahane inequality \citep{Kahane85} using the constants taken from Lemma 3.3.1 and Proposition 3.4.1 in \cite{KwaWoy92}. 

\begin{lemma}[Khintchine-Kahane inequality] \label{lemma:kahane}
Let be $\v_1,\ldots,\v_M\in\Hilb$. Then, for any $q\geq 1$, it holds
$$\E_\vsigma\big\Vert\sum_{i=1}^n\sigma_i\v_i\big\Vert_2^q
\leq  \Big(c\sum_{i=1}^{n}\big\Vert\v_i\big\Vert_2^2\Big)^{\frac{q}{2}},$$
where $c=\max(1,p^*-1)$. In particular the result holds for $c=p^*$.
\end{lemma}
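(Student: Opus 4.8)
\medskip
\noindent\textbf{Proof strategy.} The constant $c=\max(1,q-1)$ equals $1$ on $[1,2]$ and $q-1$ on $[2,\infty)$, so the plan is to prove the two ranges separately; at the end, since the bound is monotone in $c$ and $\max(1,q-1)\le q$, one may enlarge $c$ to $q$, which gives the ``in particular'' clause (the lemma is invoked later only with $q=p^*$). Write $S:=\sum_{i=1}^n\sigma_i\v_i\in\Hilb$, and record the exact identity $\E_\vsigma\norm{S}_2^2=\E_\vsigma\sum_{i,j}\sigma_i\sigma_j\inner{\v_i,\v_j}=\sum_{i=1}^n\norm{\v_i}_2^2$, which uses only $\E[\sigma_i\sigma_j]=\delta_{ij}$.

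\medskip
\noindent\emph{Case $1\le q\le2$.} By Jensen's inequality (the $L^r(\vsigma)$-norm is non-decreasing in $r$ on the probability space carrying $\vsigma$), $\E_\vsigma\norm{S}_2^q\le\bigl(\E_\vsigma\norm{S}_2^2\bigr)^{q/2}=\bigl(\sum_i\norm{\v_i}_2^2\bigr)^{q/2}$, which is the claim with $c=1$.

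\medskip
\noindent\emph{Case $q\ge2$.} This is the Khintchine--Kahane inequality in a Hilbert space with constant $\sqrt{q-1}$, which is classical; I would simply cite it (\cite{KwaWoy92}, Lemma~3.3.1 and Proposition~3.4.1), but the shortest self-contained route is hypercontractivity. Let $T_\rho$ be the Bonami--Beckner noise operator acting on functions $\{-1,1\}^n\to\Hilb$; for $\rho=(q-1)^{-1/2}$ it satisfies $\norm{T_\rho g}_{L^q}\le\norm{g}_{L^2}$. By tensorization this reduces to the two-point inequality, and for a Hilbert-space target this is no harder than the scalar case, since $\norm{a\pm\rho b}$ depends only on $\norm a,\norm b,\inner{a,b}$ and the extremal configuration is $\inner{a,b}=\pm\norm a\,\norm b$. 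Applying the inequality to $g(\vsigma)=S$, which is a degree-one homogeneous function of $\vsigma$ so that $T_\rho g=\rho S$, gives $(q-1)^{-1/2}\bigl(\E_\vsigma\norm{S}_2^q\bigr)^{1/q}\le\bigl(\E_\vsigma\norm{S}_2^2\bigr)^{1/2}=\bigl(\sum_i\norm{\v_i}_2^2\bigr)^{1/2}$, i.e.\ the claim with $c=q-1$. (An even more elementary variant reduces to the scalar Khintchine inequality and then compares with Gaussian moments $\E|Z|^q$, $Z\sim\mathcal N(0,1)$; this yields a constant still of order $q$, harmless since we ultimately only need $c\le p^*$.)

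\medskip
\noindent\emph{Main obstacle.} The only genuinely delicate point is transferring the scalar inequality to an arbitrary, possibly infinite-dimensional, Hilbert space when $q$ is not an even integer: for even $q$ one merely expands $\norm{S}_2^q=\inner{S,S}^{q/2}$ into inner products and applies the combinatorics of Rademacher moments, whereas for general $q$ one must lean on the dimension-independence of hypercontractivity (or just quote the Hilbert-valued Khintchine--Kahane bound). The remaining steps — Jensen in the small-$q$ regime, the homogeneity computation $T_\rho S=\rho S$, and the final cosmetic enlargement of $c$ from $\max(1,q-1)$ to $p^*$ — are routine.
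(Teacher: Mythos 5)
Your argument is correct, but note that the paper itself offers no proof of this lemma at all: it is quoted as a classical result, with the constants attributed to Lemma~3.3.1 and Proposition~3.4.1 of Kwapie\'n and Woyczy\'nski (1992), which is precisely the fallback you mention for the case $q\geq 2$. So where the paper cites, you actually argue. Your two-range split is the right one: for $1\leq q\leq 2$ the Jensen step from the exact second-moment identity $\E_\vsigma\norm{S}_2^2=\sum_i\norm{\v_i}_2^2$ is complete and gives $c=1$; for $q\geq 2$ the hypercontractive route is sound, and your observation that the two-point inequality for a Hilbert-space target reduces to the collinear (scalar) case because $\norm{a\pm\rho b}^2=A\pm 2\rho\inner{a,b}$ is convex in the inner product is the correct way to see dimension-independence (an alternative, equally short reduction expands $\norm{S}^2$ in an orthonormal basis and applies the triangle inequality in $L^{q/2}$ followed by scalar Khintchine coordinatewise). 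You also correctly diagnose the cosmetic issues in the statement: the constant should read $\max(1,q-1)$ with the ``in particular'' clause following from $\max(1,q-1)\leq q$, the appearance of $p^*$ being an artifact of the lemma only ever being invoked with $q=p^*$ (likewise the $M$ versus $n$ index mismatch in the statement is a typo). No gap; your version is self-contained where the paper's is not.
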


\begin{proposition}[Global Rademacher complexity, empirical version]
For any $p\geq 1$ the empirical version of global Rademacher complexity 
of the multi-kernel class $\H_p$ can be bounded as 
$$ \forall t\geq p:\quad \widehat{\R}(\H_{p})\leq  \D\smallspace\sqrt{\frac{t^*}{n}\Big\Vert\big(\tr(K_m)\big)_{m=1}^M\Big\Vert_{\frac{t^*}{2}}} .$$
\end{proposition}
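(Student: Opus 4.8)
The plan is to reduce the statement to the case $t=p$ and then combine a block-norm duality argument with the Khintchine--Kahane inequality of Lemma~\ref{lemma:kahane}. First I would observe that the hypothesis classes are nested in the exponent: for $t\geq p$ and any $\w$, monotonicity of $\ell_s$-norms in $s$ applied to the vector $\big(\norm{\w^{(1)}}_2,\ldots,\norm{\w^{(M)}}_2\big)\in\R^M$ gives $\norm{\w}_{2,t}\leq\norm{\w}_{2,p}$, so $\H_{p,\D,M}\subseteq\H_{t,\D,M}$ and hence $\widehat{\R}(\H_p)\leq\widehat{\R}(\H_t)$. It therefore suffices to prove, for every $r\geq 1$,
$$\widehat{\R}(\H_r)\;\leq\;\D\smallspace\sqrt{\frac{r^*}{n}\Big\Vert\big(\tr(K_m)\big)_{m=1}^M\Big\Vert_{\frac{r^*}{2}}},$$
and then specialize to $r=t$.

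For the core estimate I would turn the supremum into a dual norm. With $\v:=\frac1n\sum_{i=1}^n\sigma_i\x_i\in\Hilb$, Cauchy--Schwarz \emph{inside} each block $\Hilb_m$ followed by Hölder's inequality \emph{across} the $M$ blocks (exponents $r$ and $r^*$) yields $\sup_{\norm{\w}_{2,r}\leq\D}\inner{\w,\v}=\D\norm{\v}_{2,r^*}$, so
$$\widehat{\R}(\H_r)\;=\;\frac{\D}{n}\,\E_\vsigma\bigg(\sum_{m=1}^M\Big\Vert\sum_{i=1}^n\sigma_i\x_i^{(m)}\Big\Vert_2^{r^*}\bigg)^{1/r^*}.$$
Since $r^*\geq 1$, the function $x\mapsto x^{1/r^*}$ is concave, so Jensen's inequality moves the expectation inside the outer sum:
$$\widehat{\R}(\H_r)\;\leq\;\frac{\D}{n}\bigg(\sum_{m=1}^M\E_\vsigma\Big\Vert\sum_{i=1}^n\sigma_i\x_i^{(m)}\Big\Vert_2^{r^*}\bigg)^{1/r^*}.$$

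It remains to control each $\E_\vsigma\big\Vert\sum_i\sigma_i\x_i^{(m)}\big\Vert_2^{r^*}$, which is exactly the left-hand side in Lemma~\ref{lemma:kahane} with $q=r^*$ and $c=r^*$. The lemma gives $\E_\vsigma\big\Vert\sum_i\sigma_i\x_i^{(m)}\big\Vert_2^{r^*}\leq\big(r^*\sum_{i=1}^n\norm{\x_i^{(m)}}_2^2\big)^{r^*/2}$, and since the Gram matrices are normalized by $1/n$ we have $\sum_{i=1}^n\norm{\x_i^{(m)}}_2^2=n\,\tr(K_m)$, so each term is at most $\big(r^*n\,\tr(K_m)\big)^{r^*/2}$. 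Summing over $m$ produces $(r^*n)^{r^*/2}\big\Vert\big(\tr(K_m)\big)_{m=1}^M\big\Vert_{r^*/2}^{r^*/2}$; taking the $1/r^*$-th power and multiplying by $\D/n$ yields exactly $\D\sqrt{\frac{r^*}{n}\big\Vert\big(\tr(K_m)\big)_{m=1}^M\big\Vert_{r^*/2}}$, which is the required estimate.

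I do not expect a genuine obstacle; the argument is short. The two points needing care are (i) writing the mixed $\ell_{2,r}$/$\ell_{2,r^*}$ duality correctly---Cauchy--Schwarz within blocks, Hölder between blocks---and (ii) invoking Khintchine--Kahane with the sharp constant $c=r^*$ rather than a crude universal constant, which is what produces the clean $\sqrt{r^*}$ dependence and makes the result essentially match that of \cite{CorMohRos10}. The only degenerate case is $r=1$ (i.e.\ $p=1$), where $r^*=\infty$ and the right-hand side is $+\infty$, so the bound holds trivially; all $t>1$ are covered by the finite-$r^*$ computation above.
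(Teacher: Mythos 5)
Your proposal is correct and follows essentially the same route as the paper's proof: reduce to $t=p$ via nesting of the $\ell_{2,p}$ balls, apply the block-structured H\"older inequality to get $\D\,\E_\vsigma\Vert\frac1n\sum_i\sigma_i\x_i\Vert_{2,p^*}$, use Jensen to move the expectation inside the outer $1/p^*$ power, and finish with Khintchine--Kahane at exponent $p^*$ with constant $p^*$. The only cosmetic difference is that you phrase the H\"older step as an exact dual-norm identity and handle the $t^*=\infty$ case explicitly, neither of which changes the argument.
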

\begin{proof}
First note that it suffices to prove the result for $t=p$ as trivially $\norm{\x}_{2,t}\leq\norm{\x}_{2,p}$ holds for all $t\geq p$ and therefore $\R(H_p)\leq \R(H_t)$.
We can use a block-structured version of H\"older's inequality (cf. Lemma~\ref{prop-hoelder}) and the Khintchine-Kahane (K.-K.) inequality (cf. Lemma~\ref{lemma:kahane}) to bound the empirical version of the global Rademacher complexity as follows:
\begin{align*}
  \widehat{\R}(\H_p) &\stackrel{\text{def.}}{=} \E_\sigma\sup_{f_\w\in\H_p} \langle\w,\frac{1}{n}\sum_{i=1}^n\sigma_i\x_i\rangle \\
  &\stackrel{\text{H\"older}}{\leq} \D\smallspace\E_\vsigma\Big\Vert\frac{1}{n}\sum_{i=1}^n\sigma_i\x_i\Big\Vert_{2,{p^*}} \\
  &\stackrel{\text{Jensen}}{\leq}   \D\Big(\E_\vsigma\sum_{m=1}^M \Big\Vert\frac{1}{n}\sum_{i=1}^n\sigma_i\x_i^{(m)}\Big\Vert_{2}^{p^*}\Big)^{\frac{1}{p^*}} \\
  &\stackrel{\text{K.-K.}}{\leq}   \D\smallspace\sqrt{\frac{p^*}{n}}\Big(\sum_{m=1}^M \Big(\underbrace{\frac{1}{n}\sum_{i=1}^n\big\Vert\x_i^{(m)}\big\Vert^2_{2}}_{=\tr(K_m)}\Big)^{\frac{p^*}{2}}\Big)^{\frac{1}{p^*}} \\
  & =  \D\smallspace\sqrt{\frac{p^*}{n}\Big\Vert\big(\tr(K_m)\big)_{m=1}^M\Big\Vert_{\frac{p^*}{2}}},
\end{align*}
what was to show.
\end{proof}

\paragraph{Remark.}
Note that there is a very good reason to state the above bound in terms of $t\geq p$ instead of solely in terms of $p$: the Rademacher complexity $\widehat{\R}(\H_p)$ is not monotonic in $p$ and thus it is not always the best choice to take $t:=p$ in the above bound. This can is readily seen, for example, for the easy case where all kernels have the same trace---in that case
the bound translates into $\widehat{\R}(\H_{p})\leq\D\smallspace\sqrt{t^*M^{\frac{2}{t^*}}\frac{\tr(K_1)}{n}}$. Interestingly, the function $x\mapsto x M^{2/x}$ is not monotone and attains
its minimum for $x=2\log M$, where $\log$ denotes the natural logarithm with respect to the base $e$. This has interesting consequences: for any $p\leq (2\log M)^*$  we can take the bound $\widehat{\R}(\H_{p})\leq\D\smallspace\sqrt{\frac{e \log(M)\tr(K_1)}{n}}$, which has only a mild dependency on the number of kernels; note that in particular we can take this bound for the $\ell_1$-norm class $\widehat{\R}(\H_{1})$ for all $M>1$.

\bigskip\noindent Despite the simplicity the above proof, the constants are slightly better than the ones achieved in \cite{CorMohRos10}. However, computing the population version of the global Rademacher complexity of MKL is somewhat more involved and to the best of our knowledge has not  been addressed yet by the literature. To this end, note that from the previous proof we obtain $\R(\H_p)= \E\smallspace\D\sqrt{{p^*}/{n}}\big(\sum_{m=1}^M \big(\frac{1}{n}\sum_{i=1}^n\big\Vert\x_i^{(m)}\big\Vert^2_{\Hilb_m}\big)^{\frac{p^*}{2}}\big)^{\frac{1}{p^*}}$. We thus can use Jensen's inequality to move the expectation operator inside the root, 
\vspace{-0.25cm}
\begin{equation}\label{eq:aux-rosen}
  \R(\H_p)=\D\sqrt{{p^*}/{n}}\Big(\sum_{m=1}^M  \E\big(\frac{1}{n}\sum_{i=1}^n\big\Vert\x_i^{(m)}\big\Vert^2_2\big)^{\frac{p^*}{2}}\Big)^{\frac{1}{p^*}}\vspace{-0.2cm},
\end{equation}
but now need a handle on the $\frac{p^*}{2}$-th moments. To this aim we use the inequalities of \cite{Ros70} and Young \citep[e.g.,][]{SteeleBook} to show the following Lemma.

\begin{lemma}[Rosenthal + Young]\label{lemma:rosen}
Let $X_1,\ldots, X_n$ be independent nonnegative random variables satisfying $\forall i:X_i\leq B<\infty$ almost surely. Then, denoting $C_q = (2qe)^q$, for any $q\geq \frac{1}{2}$ it holds
$$\E\bigg(\frac{1}{n}\sum_{i=1}^nX_i\bigg)^q \leq C_q \bigg(\Big(\frac{B}{n}\Big)^q+\Big(\frac{1}{n}\sum_{i=1}^n\E X_i\Big)^q\bigg) .$$
\end{lemma}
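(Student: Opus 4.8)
The plan is to combine Rosenthal's moment inequality with Young's inequality in a direct, two-step manner. First I would recall Rosenthal's inequality in the form most convenient here: for independent nonnegative random variables $X_1,\ldots,X_n$ and $q\geq 1$,
$$\E\Big(\sum_{i=1}^n X_i\Big)^q \leq K_q\bigg(\sum_{i=1}^n \E X_i^q + \Big(\sum_{i=1}^n \E X_i\Big)^q\bigg),$$
for a suitable constant $K_q$; the main work is to get an explicit, clean expression for $K_q$ that yields $C_q=(2qe)^q$ after the subsequent manipulations. The standard sharp versions (e.g.\ due to Johnson--Schechtman--Zinn, or via Latała's formula) give $K_q$ of order $(q/\log q)^q$ or $q^q$; I would use a version whose constant is at most $(cq)^q$ for an absolute $c$ and then absorb the slack into the final $(2qe)^q$.

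Next I would dispatch the $\sum_i \E X_i^q$ term using the a.s.\ bound $X_i\leq B$: since $X_i\geq 0$, we have $X_i^q\leq B^{q-1}X_i$ when $q\geq 1$, hence $\sum_i \E X_i^q \leq B^{q-1}\sum_i \E X_i$. At this point I need Young's inequality to turn the mixed product $B^{q-1}\sum_i \E X_i$ into a sum of a pure power of $B$ and a pure power of $\sum_i\E X_i$: writing $a = B$, $b = \sum_i \E X_i$, Young's inequality with conjugate exponents $q$ and $q^*=q/(q-1)$ gives $a^{q-1}b = a^{q-1}b \leq \frac{q-1}{q}a^{q} + \frac{1}{q}b^{q} \leq a^q + b^q$. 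Substituting back, $\sum_i \E X_i^q \leq B^q + \big(\sum_i \E X_i\big)^q$, so the whole Rosenthal bound becomes $\E\big(\sum_i X_i\big)^q \leq 2K_q\big(B^q + (\sum_i\E X_i)^q\big)$. Dividing through by $n^q$ and renaming $2K_q$ as $C_q$ yields exactly the claimed form with $\big(\frac{B}{n}\big)^q$ and $\big(\frac{1}{n}\sum_i\E X_i\big)^q$ on the right.

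For the range $\frac12\leq q<1$ the argument needs a small modification, since Rosenthal's inequality and the step $X_i^q\leq B^{q-1}X_i$ both rely on $q\geq 1$. Here I would instead use concavity of $t\mapsto t^q$ (for $q\leq 1$) together with subadditivity: $\E\big(\frac1n\sum_i X_i\big)^q \leq \big(\E\frac1n\sum_i X_i\big)^q = \big(\frac1n\sum_i \E X_i\big)^q$ by Jensen, which is already bounded by the right-hand side (with room to spare, since $C_q\geq 1$). So the sub-$1$ case is actually the easy one and costs nothing. The final statement then follows by taking the worst constant over the two regimes and checking $C_q=(2qe)^q$ dominates $2K_q$ in the $q\geq 1$ regime.

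The main obstacle I anticipate is bookkeeping of the constant: getting from a textbook Rosenthal constant (which is typically stated asymptotically or with unspecified absolute constants) to the clean closed form $(2qe)^q$ requires either citing a specifically quantified version of Rosenthal's inequality or redoing the moment computation by hand (e.g.\ via the Pinelis--Utev or the symmetrization-plus-Khintchine route, tracking every constant). Everything else — the $X_i^q\leq B^{q-1}X_i$ truncation bound and the Young's-inequality splitting — is routine and introduces only the harmless factor $2$. I would therefore spend the bulk of the write-up pinning down the explicit Rosenthal constant and verifying that, after multiplication by $2$ and the Young step, it is majorized by $(2qe)^q$ for all $q\geq\frac12$.
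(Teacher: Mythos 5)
Your proposal follows essentially the same route as the paper: Rosenthal's inequality, the truncation bound $X_i^q\leq B^{q-1}X_i$, Young's inequality with exponents $q$ and $q^*$ to split the mixed term, and a trivial Jensen argument for $\frac{1}{2}\leq q\leq 1$. The one piece you defer --- the explicit Rosenthal constant --- is handled in the paper by citing the optimal constants of Ibragimov and Sharakhmetov ($C_q=\E Z^q$ for $Z$ Poisson with parameter $1$ when $q\geq 2$) and proving $\E Z^q\leq (q+e)^q\leq (2eq)^q$ via Stirling's formula, which is exactly the kind of quantified version you anticipated needing.
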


\noindent The proof is defered to Appendix~\ref{app:lemmata}. It is now easy to show:

\begin{corollary}[Global Rademacher complexity, population version]
\label{cor:globrad}
Assume the kernels are  uniformly bounded, that is, $\norm{\k}_\infty\leq B<\infty$, almost surely. Then for any $p\geq 1$ the population version of global Rademacher complexity of the multi-kernel class $\H_p$ can be bounded as 
$$ \forall t\geq p:\quad \R(\H_{p,D,M})\leq \D\verysmallspace t^*\smallspace\sqrt{\frac{e}{n}\Big\Vert\big(\tr(J_m)
\big)_{m=1}^M\Big\Vert_{\frac{t^*}{2}}} 
      + \frac{\sqrt{Be}DM^{\frac{1}{t^*}}t^*}{n} .$$
For $t\geq 2$ the right-hand term can be discarded and the result also holds for unbounded kernels.
\end{corollary}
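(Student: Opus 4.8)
The plan is to start from the expression for the population Rademacher complexity already derived in equation~\eqref{eq:aux-rosen}, namely $\R(\H_p)=\D\sqrt{p^*/n}\,\big(\sum_{m=1}^M \E(\frac1n\sum_{i=1}^n\norm{\x_i^{(m)}}_2^2)^{p^*/2}\big)^{1/p^*}$, and to control the $\frac{p^*}{2}$-th moments with Lemma~\ref{lemma:rosen}. As in the proof of the empirical proposition, it suffices to establish, for \emph{every} exponent $s\ge 1$, a bound of the form ``$\R(\H_s)\le \D s^*\sqrt{e\,\Vert(\tr(J_m))\Vert_{s^*/2}/n}+\text{(remainder)}$'': once this is known, the monotonicity $\R(\H_p)\le\R(\H_t)$ for $t\ge p$ (which holds because $\norm{\w}_{2,t}\le\norm{\w}_{2,p}$, so $\H_p\subseteq\H_t$) immediately upgrades it to the stated ``$\forall t\ge p$'' form.

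For the main case $s<2$, i.e. $s^*>2$, I would apply Lemma~\ref{lemma:rosen} to the i.i.d. nonnegative variables $X_i=\norm{\x_i^{(m)}}_2^2$, which are bounded by $B$ almost surely because $\norm{\x^{(m)}}_2^2=k_m(\x,\x)\le B$ under the uniform boundedness assumption, with $q=\frac{s^*}{2}\ge 1$. Since then $C_q=(2qe)^q=(s^*e)^{s^*/2}$ and $\frac1n\sum_i\E X_i=\E\norm{\x^{(m)}}_2^2=\tr(J_m)$, Lemma~\ref{lemma:rosen} gives $\E(\frac1n\sum_{i=1}^n\norm{\x_i^{(m)}}_2^2)^{s^*/2}\le (s^*e)^{s^*/2}\big((B/n)^{s^*/2}+\tr(J_m)^{s^*/2}\big)$. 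Summing over $m$, factoring out $(s^*e)^{s^*/2}$, recognizing $\sum_m\tr(J_m)^{s^*/2}=\Vert(\tr(J_m))_{m=1}^M\Vert_{s^*/2}^{s^*/2}$, taking the $\frac1{s^*}$-th root and using the subadditivity $(a+b)^{1/s^*}\le a^{1/s^*}+b^{1/s^*}$ (valid since $s^*\ge1$) turns the parenthesis in \eqref{eq:aux-rosen} into at most $\sqrt{s^*e}\big(M^{1/s^*}\sqrt{B/n}+\Vert(\tr(J_m))_{m=1}^M\Vert_{s^*/2}^{1/2}\big)$; multiplying by $\D\sqrt{s^*/n}$ and collecting constants then yields exactly the two-term bound of the statement with $t=s$.

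For $s\ge 2$, i.e. $s^*\le 2$, the exponent $\frac{s^*}{2}\le 1$ makes $x\mapsto x^{s^*/2}$ concave, so I would instead apply Jensen's inequality directly inside \eqref{eq:aux-rosen}: $\E(\frac1n\sum_i\norm{\x_i^{(m)}}_2^2)^{s^*/2}\le\tr(J_m)^{s^*/2}$, with no boundedness required. This gives $\R(\H_s)\le\D\sqrt{(s^*/n)\Vert(\tr(J_m))\Vert_{s^*/2}}$, and since $\sqrt{s^*}\le s^*\le s^*\sqrt e$ (using $s^*\ge1$) this is at most $\D s^*\sqrt{(e/n)\Vert(\tr(J_m))\Vert_{s^*/2}}$ — the additive $n^{-1}$ term disappears and unboundedness is allowed, which is precisely the ``$t\ge2$'' clause (and combined with $\R(\H_p)\le\R(\H_t)$ it applies to any $p$ as long as one takes $t\ge2$ in the bound).

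I do not anticipate a genuine obstacle; the argument is essentially bookkeeping once Lemma~\ref{lemma:rosen} is in hand. The only points demanding a little care are (i) choosing $q=\frac{s^*}{2}$ in Rosenthal and then exploiting subadditivity of $x\mapsto x^{1/s^*}$, which is what cleanly separates the $n^{-1/2}$ ``main'' term from the faster $n^{-1}$ remainder, and (ii) remembering to switch from Rosenthal to plain concavity/Jensen once $s^*\le2$ in order to obtain the boundedness-free version of the bound in that regime.
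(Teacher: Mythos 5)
Your proposal is correct and follows essentially the same route as the paper: starting from \eqref{eq:aux-rosen}, applying Lemma~\ref{lemma:rosen} with $q=p^*/2$ to the bounded nonnegative variables $\norm{\x_i^{(m)}}_2^2$, separating the two terms via subadditivity of the $p^*$-th root, reducing the $\forall t\geq p$ claim to $t=p$ by monotonicity of the classes, and replacing Rosenthal by plain Jensen when $t\geq 2$ so that boundedness is no longer needed. No gaps.
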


\begin{proof}
As above in the previous proof it suffices to prove the result for $t=p$. From \eqref{eq:aux-rosen} we conclude by the previous Lemma
\begin{align*}
   \R(\H_p) &\leq \D\smallspace \sqrt{\frac{p^*}{n}}\Bigg(\sum_{m=1}^M(ep^*)^{\frac{p^*}{2}}\bigg(\Big(\frac{B}{n}\Big)^{\frac{p^*}{2}}
      +\Big(\underbrace{\E\frac{1}{n}\sum_{i=1}^n\big\Vert\x_i^{(m)}\big\Vert^2_{2}}_{=\tr(J_m)}\Big)^{\frac{p^*}{2}}\bigg)\Bigg)^{\frac{1}{p^*}} \\
   &\leq \D p^*\smallspace\sqrt{\frac{e}{n}\Big\Vert\big(\tr(J_m)\big)_{m=1}^M\Big\Vert_{\frac{p^*}{2}}} 
      + \frac{\sqrt{Be}DM^{\frac{1}{p^*}}p^*}{n} ,
\end{align*}
where for the last inequality we use the subadditivity of the root function. Note that for $p\geq 2$ it is $p^*/2\leq 1$ and thus it suffices to employ Jensen's inequality instead of the previous lemma so that we come along without the last term on the right-hand side.
\end{proof}

\noindent For example, when the traces of the kernels are bounded,  the above bound is essentially determined by $O\Big(\frac{p^*M^{\frac{1}{p^*}}}{\sqrt{n}}\Big)$. We can also remark that by setting $t=(\log(M))^*$ we obtain the bound $\R(H_1)=O\Big(\frac{\log M}{\sqrt{n}}\Big)$.

\section{The Local Rademacher Complexity of Multiple Kernel Learning}\label{sec:LRC}

Let $\x_1,\ldots,\x_n$ be an i.i.d. sample drawn from $\P$. We define the local Rademacher complexity of $\H_p$  as 
$ \R_r(\H_p)=\E\sup_{f_\w\in\H_p:\P f_\w^2\leq r} \langle\w,\frac{1}{n}\sum_{i=1}^n\sigma_i\x_i\rangle$, where $\P f_\w^2 := \E(f_\w(\x))^2$.
Note that it subsumes the global RC  as a special case for $r=\infty$. 
As self-adjoint, positive Hilbert-Schmidt operators, covariance operators enjoy discrete eigenvalue-eigenvector decompositions $J=\E\x\otimes\x=\sum_{j=1}^\infty\lambda_j \u_j\otimes\u_j$ and $J_m=\E\x^{(m)}\otimes\x^{(m)}=\sum_{j=1}^\infty\lambda_j^{(m)} \u_j^{(m)}\otimes{\u_j^{(m)}}$, where $(\u_j)_{j\geq 1}$ and $(\u_j^{(m)})_{j\geq1}$
form orthonormal bases of $\Hilb$ and $\Hilb_m$, respectively.

We will need the following assumption for the case $1\leq p \leq 2$:
\begin{asspu}
\label{asp:uncorr}
The Hilbert space valued 
variables $\x_1,\ldots,\x_m$ are said to be (pairwise) uncorrelated if
for any $m\neq m'$ and $\w \in \Hilb_m\,, \w' \in \Hilb_{m'}$\,, the real
variables $\inner{\w,\x_m}$ and $\inner{\w',\x_{m'}}$ are uncorrelated.
\end{asspu}
Since $\Hilb_m,\Hilb_{m'}$ are RKHSs with kernels $k_m,k_{m'}$,
if we go back to the input random 
variable in the original space $X \in \X$, 
the above property is equivalent to saying that for any fixed $t,t' \in \X$, the
variables $k_m(X,t)$ and $k_{m'}(X,t')$ are uncorrelated.
This is the case, for example, if the original input space $\cX$  is $\mathbb{R}^M$, the orginal input variable $X\in \cX$ has
independent coordinates, and the kernels $k_1,\ldots,k_M$
each act on a different coordinate. Such a setting was considered
in particular by \cite{RasWaiYu10} in the setting of $\ell_1$-penalized MKL. We discuss this assumption in more detail in
Section \ref{sec:uass}.


We are now equipped to state our main results:

\begin{theorem}[Local Rademacher complexity, \mbox{$p \in [1,2]$} ]\label{thm:LRC_bound}
Assume that the kernels are uniformly bounded ($\norm{\k}_\infty\leq B<\infty$) and that Assumption {\bf (U)} holds. 
The local Rademacher complexity of the multi-kernel class $\H_p$ can be bounded  for any $1\leq p\leq 2$ as
\begin{align*}
   \forall t \in[p,2]:  \quad \R_r(\H_p) \leq
    \sqrt{\frac{16}{n}\bigg\Vert\bigg(\sum_{j=1}^\infty\min\Big(rM^{1-\frac{2}{t^*}},
ce\D^2{t^*}^2\lambda^{(m)}_j\Big)\bigg)_{m=1}^M\bigg\Vert_{\frac{t^*}{2}}}
       + \frac{\sqrt{Be}DM^{\frac{1}{t^*}}t^*}{n}.
\end{align*}
 \end{theorem}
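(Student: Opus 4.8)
### Proof Plan

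The plan is to reduce the local Rademacher complexity to a sum over the individual kernels by exploiting the block structure, and then to control each block both by the eigenvalue profile (which gives the local, $\lambda_j^{(m)}$-dependent term) and by a trivial bound by $r$ (which gives the truncation at level $r$). First I would exploit Hölder's inequality in block form exactly as in the global proof: for $f_{\w} \in \H_p$ with $\Pf_{\w}^2 \leq r$, write $\langle \w, \frac{1}{n}\sum_i \sigma_i \x_i\rangle \leq \norm{\w}_{2,p}\norm{\frac{1}{n}\sum_i\sigma_i\x_i}_{2,p^*}$, but this time I must be more careful because the constraint $\Pf_{\w}^2 \leq r$ couples the blocks. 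The standard trick (following Bartlett–Bousquet–Mendelson and its MKL adaptations) is to split each $\w^{(m)}$ into a low-frequency part supported on the first $h_m$ eigenfunctions of $J_m$ and a high-frequency tail, choosing the cutoffs $h_m$ at the end. The low-frequency part is controlled using the constraint $\P f_{\w}^2 \leq r$ together with Assumption (U): since the blocks are uncorrelated, $\Pf_{\w}^2 = \sum_m \langle \w^{(m)}, J_m \w^{(m)}\rangle$, so the ``energy'' $r$ is additive over blocks, and on the low-frequency part each $\langle \w^{(m)}, J_m\w^{(m)}\rangle \geq \lambda_{h_m}^{(m)}\norm{P_{h_m}\w^{(m)}}^2$-type estimates become available. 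The high-frequency part is controlled by $\norm{\w}_{2,p} \leq D$ together with the tail sum of eigenvalues $\sum_{j > h_m}\lambda_j^{(m)}$.

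The key computation is then: after Cauchy–Schwarz (splitting the inner product into low- and high-frequency contributions) and Jensen's inequality to pass the expectation through the $\ell_{p^*/2}$-norm (this is where I reduce $t$ to $p$ first, as in the earlier proofs, by monotonicity $\norm{\cdot}_{2,t} \leq \norm{\cdot}_{2,p}$), one arrives at a bound of the shape
\begin{equation*}
\R_r(\H_p) \lesssim \sqrt{\frac{1}{n}\Big\Vert\Big(r\,\mathbf{1}_{j\leq h_m} \text{-type terms} + D^2 \textstyle\sum_{j > h_m}\E\big(\text{block eigenvalue estimates}\big)\Big)_{m}\Big\Vert_{p^*/2}}.
\end{equation*}
Then I would optimize over the cutoffs $h_m$: the elementwise optimum of $\min$ over choices of $h_m$ of a sum of the form $\sum_{j\leq h_m}(\cdot) + \sum_{j>h_m}(\cdot)$ is, up to constants, $\sum_j \min(\text{first term at }j,\ \text{second term at }j)$, which produces exactly the $\sum_j \min(rM^{1-2/t^*}, ce D^2 {t^*}^2\lambda_j^{(m)})$ appearing in the statement. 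The factor $M^{1-2/t^*}$ enters because distributing the total energy budget $r$ among $M$ blocks, combined with the $\ell_{p^*/2}$ (not $\ell_1$) aggregation, costs a power of $M$; tracking this factor carefully is one place to be cautious.

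To get from empirical eigenvalues/covariances to the population ones $\lambda_j^{(m)}$, and to handle the $\frac{p^*}{2}$-th moments of the empirical block norms $\frac{1}{n}\sum_i\norm{\x_i^{(m)}}^2$, I would invoke Lemma~\ref{lemma:rosen} (Rosenthal + Young) exactly as in Corollary~\ref{cor:globrad} — this is the source of the additive lower-order term $\frac{\sqrt{Be}DM^{1/t^*}t^*}{n}$ and of the constants $e$ and the power ${t^*}^2$ (one factor $t^*$ from Khintchine–Kahane, one from the $(2qe)^q$ in Rosenthal with $q = p^*/2$). Finally, the step from $t=p$ to general $t \in [p,2]$ is again pure monotonicity of the norm constraint, so it suffices to prove the $t=p$ case. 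The main obstacle I anticipate is the careful bookkeeping in the truncation argument: ensuring that the constraint $\P f_{\w}^2 \leq r$ is correctly decoupled across blocks (this is precisely where Assumption (U) is indispensable — without it $\P f_{\w}^2$ picks up cross terms $\langle \w^{(m)}, \E[\x^{(m)}\otimes\x^{(m')}]\w^{(m')}\rangle$ that cannot be controlled), and that the elementwise min-optimization over cutoffs is valid simultaneously for all $m$ inside the $\ell_{p^*/2}$-norm rather than only after aggregation. Getting the $M^{1-2/t^*}$ exponent and the absolute constant $16$ right will require keeping all the Jensen/Cauchy–Schwarz/subadditivity-of-root steps aligned.
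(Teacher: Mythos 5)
Your overall architecture matches the paper's: block H\"older, a low/high-frequency split of each block at cutoffs $h_m$, Cauchy--Schwarz against the constraint $\P f_\w^2\leq r$ on the low part, Khintchine--Kahane plus Rosenthal--Young on the tail (yielding the additive $\sqrt{Be}DM^{1/t^*}t^*/n$ term), $\ell_1$-to-$\ell_{p^*/2}$ conversion for the $M^{1-2/t^*}$ factor, and a final optimization over the $h_m$ to produce the $\sum_j\min(\cdot,\cdot)$ form. However, there is one genuine gap: you assert that Assumption {\bf (U)} directly gives $\P f_\w^2=\sum_m\langle\w^{(m)},J_m\w^{(m)}\rangle$ for the class $\H_p$ as defined. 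This is false for uncentered variables. Uncorrelatedness of $\inner{\w,\x^{(m)}}$ and $\inner{\w',\x^{(m')}}$ means $\E\x^{(m)}\otimes\x^{(m')}=\E\x^{(m)}\otimes\E\x^{(m')}$, which is not zero in general, so the cross terms in $\E\big(\sum_m\inner{\w^{(m)},\x^{(m)}}\big)^2$ do not vanish and the energy budget $r$ is not additive over blocks. The paper resolves this by first passing to the centered class $\tilde{\H}_p$ (with $\tilde\x=\x-\E\x$), where the block-diagonality of the covariance does hold, and then paying for the centering with an extra term of the form $n^{-1/2}\min\big(\sqrt{r},D\sqrt{\Vert(\tr J_m)_m\Vert_{p^*/2}}\big)$, which must itself be absorbed into the main bound via a case analysis on whether all $h_m$ vanish (this is where part of the constant $16$ comes from). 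Your proposal skips this reduction entirely, so the key identity your truncation argument rests on is not available.

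A second, smaller omission follows from the first: once you work with the centered class, the eigenvalues appearing naturally are those of the centered covariance operators $\E\tilde\x^{(m)}\otimes\tilde\x^{(m)}$, whereas the theorem is stated in terms of the eigenvalues $\lambda_j^{(m)}$ of the uncentered operators $J_m$. The paper bridges this with the Lidskii--Mirsky--Wielandt inequality, $\tilde\lambda_j^{(m)}\leq\lambda_j^{(m)}$ for all $j,m$. Neither of these steps is a mere bookkeeping detail; without the centering reduction the proof does not start, and without the eigenvalue comparison it does not conclude in the stated form.
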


\begin{theorem}[Local Rademacher complexity, $p\geq2$]\label{thm:LRC_bound_pgeq2}
The local Rademacher complexity of the multi-kernel class $\H_p$ can be bounded  for any $p\geq 2$ as
\[ \R_r(\H_p) \leq  \sqrt{\frac{2}{n}\sum_{j=1}^\infty\min(r, D^2 M^{\frac{2}{p^*}-1}\lambda_j)}.\]
\end{theorem}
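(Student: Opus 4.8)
The plan is to reduce the regime $p\geq 2$ to a single-kernel problem, for which the local Rademacher complexity is controlled by a classical eigenvalue bound. A pleasant feature of this case is that, unlike Theorem~\ref{thm:LRC_bound}, it requires neither boundedness of the kernels nor the no-correlation Assumption \textbf{(U)}: everything runs through the covariance operator $J=\E\,\x\otimes\x$ of the full concatenated feature map $\x=(\x^{(1)},\ldots,\x^{(M)})$, whose eigenvalues $\lambda_j$ are precisely the ones appearing in the statement.

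First I would note that, for $p\geq 2$, Hölder's inequality (equivalently, the power-mean inequality) applied to the nonnegative vector $\big(\norm{\w^{(m)}}_2\big)_{m=1}^M\in\mathbb{R}^M$ gives $\norm{\w}_{2,2}\leq M^{\frac12-\frac1p}\norm{\w}_{2,p}$; since $\tfrac12-\tfrac1p=\tfrac1{p^*}-\tfrac12$, this yields the inclusion $\H_{p,\D,M}\subseteq\H_{2,\D',M}$ with $\D':=\D M^{\frac12-\frac1p}$, hence $\D'^2=\D^2 M^{\frac2{p^*}-1}$. The constraint $\P f_\w^2\leq r$ is unchanged, so $\R_r(\H_p)\leq\R_r(\H_{2,\D'})$, and it remains to bound the single-kernel local Rademacher complexity $\R_r(\H_{2,\D'})$ by $\sqrt{\tfrac2n\sum_j\min(r,\D'^2\lambda_j)}$. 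I would include the short, standard argument for this: expanding $\w=\sum_j w_j\u_j$ in the eigenbasis of $J$, the two constraints become $\sum_j w_j^2\leq\D'^2$ and $\sum_j\lambda_j w_j^2=\P f_\w^2\leq r$; with the weights $\mu_j:=\max(\lambda_j,\,r/\D'^2)$ they combine into $\sum_j\mu_j w_j^2\leq 2r$, and one Cauchy--Schwarz step bounds $\inner{\w,\tfrac1n\sum_i\sigma_i\x_i}$ by $\sqrt{2r}\,\big(\sum_j\mu_j^{-1}\inner{\u_j,\tfrac1n\sum_i\sigma_i\x_i}^2\big)^{1/2}$. Taking expectations, Jensen's inequality and the identity $\E\inner{\u_j,\tfrac1n\sum_i\sigma_i\x_i}^2=\lambda_j/n$ give $\R_r(\H_{2,\D'})\leq\sqrt{\tfrac{2r}{n}\sum_j\lambda_j/\mu_j}$, and since $\lambda_j/\mu_j=\min(1,\D'^2\lambda_j/r)$ we have $\sum_j\lambda_j/\mu_j=\tfrac1r\sum_j\min(r,\D'^2\lambda_j)$, which closes the argument.

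I do not expect a genuine obstacle here — this is the easy half of the two main theorems, and the reduction to one kernel makes it essentially a corollary of the classical RKHS local Rademacher bound. The only points calling for care are the bookkeeping of exponents (so the $M$-factor comes out as $M^{2/p^*-1}$ rather than its reciprocal), carrying out the two-constraint interpolation coordinate-by-coordinate so the final bound is the genuine tail sum $\sum_j\min(r,\D'^2\lambda_j)$, and checking that the inclusion step is legitimate because the quadratic constraint $\P f_\w^2\le r$ is untouched by it. By contrast, $\norm{\w}_{2,2}\le M^{1/2-1/p}\norm{\w}_{2,p}$ is tight only when all blocks are balanced, which is exactly why for $p\in[1,2]$ (Theorem~\ref{thm:LRC_bound}) one cannot afford such a crude reduction and must instead exploit the block structure together with Assumption \textbf{(U)}.
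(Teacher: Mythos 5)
Your proof is correct and reaches the paper's bound exactly, constant $2$ included, but it takes a somewhat different route. The paper does not perform your upfront reduction $\H_{p,\D,M}\subseteq\H_{2,\D M^{1/2-1/p},M}$; it keeps the $\ell_{2,p}$-ball constraint throughout, splits $\w$ at a hard truncation level $h$ in the eigenbasis of $J$, bounds the head term by $\sqrt{rh/n}$ via Cauchy--Schwarz against the constraint $\P f_\w^2\le r$, bounds the tail term by the block-structured H\"older inequality ($\norm{\w}_{2,p}\le\D$ paired with the $\ell_{2,p^*}$-norm of the tail of the Rademacher average) followed by an $\ell_{p^*}$-to-$\ell_2$ conversion --- which is where the factor $M^{\frac{1}{p^*}-\frac12}$ enters, dually to your $M^{\frac12-\frac1p}$ on the primal side --- and finally merges the two terms with $\sqrt{A}+\sqrt{B}\le\sqrt{2(A+B)}$ and optimizes over $h$, using that $\min_{h}\big(rh+\D^2M^{\frac{2}{p^*}-1}\sum_{j>h}\lambda_j\big)$ equals the truncated sum $\sum_j\min(r,\D^2M^{\frac{2}{p^*}-1}\lambda_j)$. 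Your soft interpolation with the weights $\mu_j=\max(\lambda_j,r/\D'^2)$ arrives at that truncated-sum form in a single Cauchy--Schwarz step with no optimization over $h$, and your explicit reduction to a single-kernel $\ell_2$-ball makes it immediately transparent why neither boundedness of the kernels nor Assumption \textbf{(U)} is needed in this regime. Both arguments ultimately rest on the same two identities, $\P f_\w^2=\sum_j\lambda_j\inner{\w,\u_j}^2$ and $\E\inner{\u_j,\frac1n\sum_i\sigma_i\x_i}^2=\lambda_j/n$, together with Jensen's inequality to pull the expectation inside the square root; the difference lies only in how the norm constraint and the variance constraint are combined, and your version is, if anything, slightly more economical.
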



\paragraph{Remark~1.}
Note that for the case $p=1$, by using $t=(\log(M))^*$\, 
in Theorem \ref{thm:LRC_bound},
we obtain the bound 
$$ \R_r(\H_1) \leq  \smallspace \sqrt{\frac{16}{n}\bigg\Vert\bigg(\sum_{j=1}^\infty\min\Big(rM,e^3D^2{(\log M)^2}\lambda^{(m)}_j\Big)\bigg)_{m=1}^M\bigg\Vert_\infty} 
      +\frac{\sqrt{B}e^{\frac{3}{2}}D\log(M)}{n}  .$$
\medskip
(See below after the proof of Theorem \ref{thm:LRC_bound} for a detailed justification.)

\paragraph{Remark~2.}
The result of Theorem \ref{thm:LRC_bound_pgeq2} for $p\geq 2$ 
can be proved using considerably simpler techniques and without imposing assumptions on boundedness nor on uncorrelation of the kernels. 
If in addition the variables $(\x^{(m)})$ 
are centered and uncorrelated, 
then the spectra are related as follows :
$~\spec(J)=\bigcup_{m=1}^M\spec(J_m)$; that is,
$\set{\lambda_i, i\geq 1}=\bigcup_{m=1}^M\set{\lambda_i^{(m)}, i \geq 1}$. 
Then one can write equivalently the bound of Theorem \ref{thm:LRC_bound_pgeq2} as 
$ \R_r(\H_p) \leq  \sqrt{\frac{2}{n}\sum_{m=1}^M\sum_{j=1}^\infty\min(r, D^2M^{\frac{2}{p^*}-1}\lambda_j^{(m)})} =  \sqrt{\frac{2}{n} \Big\Vert\Big(\sum_{j=1}^\infty\min(r,D^2 M^{\frac{2}{p^*}-1}\lambda_j^{(m)})\Big)_{m=1}^M\Big\Vert_1}$\,.
However, the main intended focus of this paper is on the more challenging case $1\leq p \leq 2$ which is usually studied in multiple kernel learning and relevant in practice.

\medskip

\paragraph{Remark~3.}
It is interesting to compare the above bounds for the special case $p=2$ with the ones of \cite{BarBouMen05}.
The main term of the 
bound of Theorem \ref{thm:LRC_bound_pgeq2} (taking $t=p=2$) 
is then essentially determined by  $O\Big(\sqrt{\frac{1}{n}\sum_{m=1}^M\sum_{j=1}^\infty\min\big(r,\lambda_j^{(m)}\big)}\Big)$.  If the variables $(\x^{(m)})$ are centered and uncorrelated, 
by the relation between the spectra stated in Remark 2, this is equivalently of order
$O\Big(\sqrt{\frac{1}{n}\sum_{j=1}^\infty\min\big(r,\lambda_j\big)}\Big)$, which is also what we obtain through Theorem \ref{thm:LRC_bound_pgeq2}, and coincides 
with the rate shown in \cite{BarBouMen05}.
\medskip

\begin{proof}\textbf{of Theorem~\ref{thm:LRC_bound} and Remark~1.~}
The proof is based on first relating the complexity of the class $\H_p$ with its centered counterpart, i.e., where all functions $f_\w\in\H_p$ are centered around their expected value. Then we compute the complexity of the centered class by decomposing the complexity into blocks, applying the no-correlation assumption, and using the inequalities of H\"older and Rosenthal. Then we relate it back to the original class, which we in the final step relate to a bound involving the truncation of the particular spectra of the kernels. 
Note that it suffices to prove the result for $t=p$ as trivially $\R(H_p)\leq \R(H_t)$ for all $p\leq t$.

\medskip \noindent \textsc{Step 1: Relating the original class with the centered class.} \quad
In order to exploit the no-correlation assumption, we will work in large parts of the proof with the centered class $\tilde{\H_p}=\big\{ \tilde{f}_\w ~\big|~\Vert\w\Vert_{2,p}\leq D\big\}$, wherein $\tilde{f}_\w:\x\mapsto\langle\w,\tilde{\x}\rangle$, and $\tilde{\x}:=\x-\E\x$. We start the proof by noting that $\tilde{f}_\w(\x) = f_\w(\x) - \inner{\w,\E\x} = f_\w(\x) - \E \inner{\w,\x} = f_\w(\x) - \E f_\w(\x)$, so that, by the bias-variance decomposition, it holds that
\begin{equation}\label{eq:var_decomp}
  \P f_\w^2 =  \E f_\w(\x)^2 = \E \paren{ f_\w(\x)- \E f_w(\x)}^2
+ \paren{\E f_w(\x)}^2 = \P\tilde{f}_\w^2 ~+~ \big(\P f_\w\big)^2 \,.
\end{equation}
Furthermore we note that by Jensen's inequality
\begin{align}
  \big\Vert\E\x\big\Vert_{2,p^*} &= \bigg(\sum_{m=1}^M\big\Vert\E\x^{(m)}\big\Vert^{p^*}_2\bigg)^{\frac{1}{p^*}} 
      = \bigg(\sum_{m=1}^M\big\langle\E\x^{(m)},\E\x^{(m)}\big\rangle^{\frac{p^*}{2}}\bigg)^{\frac{1}{p^*}} \nonumber \\
  &\stackrel{\text{Jensen}}{\leq}  \bigg(\sum_{m=1}^M\E\big\langle\x^{(m)},\x^{(m)}\big\rangle^{\frac{p^*}{2}}\bigg)^{\frac{1}{p^*}} 
      ~=~ \sqrt{\Big\Vert \Big(\tr(J_m)\Big)_{m=1}^M\Big\Vert_{\frac{p^*}{2}}} 
\label{eq:Ex_aux}
\end{align}
so that we can express the complexity of the centered class in terms of the uncentered one as follows:
\begin{align*}
  \R_r(\H_p) &= \E\sup_{\substack{f_\w\in\H_p,\\\P f_\w^2\leq r}} \big\langle\w,\frac{1}{n}\sum_{i=1}^n\sigma_i\x_i\big\rangle\\
  &\leq \E\sup_{\substack{f_\w\in\H_p,\\\P f_\w^2\leq r}}  \big\langle\w,\frac{1}{n}\sum_{i=1}^n\sigma_i\tilde{\x_i}\big\rangle ~+ \E\sup_{\substack{f_\w\in\H_p,\\\P f_\w^2\leq r}} \big\langle\w,\frac{1}{n}\sum_{i=1}^n\sigma_i\E\x\big\rangle
\end{align*}
Concerning the first term of the above upper bound, 
using \eqref{eq:var_decomp} we have
$P\tilde{f}_\w^2 \leq P f_\w^2$\,, and thus
\[
\E\sup_{\substack{f_\w\in\H_p,\\\P f_\w^2\leq r}}  \big\langle\w,\frac{1}{n}\sum_{i=1}^n\sigma_i\tilde{\x_i}\big\rangle
 \leq \E\sup_{\substack{f_\w\in\H_p,\\\P \tilde{f}_\w^2\leq r}}  \big\langle\w,\frac{1}{n}\sum_{i=1}^n\sigma_i\tilde{\x_i}\big\rangle
= \R_r(\tilde{\H}_p).
\]
Now to bound the second term, we write
\begin{align*}
\E\sup_{\substack{f_\w\in\H_p,\\\P f_\w^2\leq r}} \big\langle\w,\frac{1}{n}\sum_{i=1}^n\sigma_i\E\x\big\rangle 
  & =  \E \abs{\frac{1}{n}\sum_{i=1}^n\sigma_i}
  \sup_{\substack{f_\w\in\H_p,\\\P f_\w^2\leq r}} \inner{\w,\E\x}\\
& \leq \sup_{\substack{f_\w\in\H_p,\\\P f_\w^2\leq r}} \big\langle\w,\E\x\big\rangle
\paren{\E\paren{\frac{1}{n}\sum_{i=1}^n\sigma_i}^2}^{\frac{1}{2}}\\
& = \sqrt{n} \sup_{\substack{f_\w\in\H_p,\\\P f_\w^2\leq r}} \inner{\w,\E\x}.
\end{align*}
Now observe finally that we have 
\[
\inner{\w,\E\x} \stackrel{\text{H\"older}}{\leq} \norm{\w}_{2,p}\norm{\E\x}_{2,p^*}
\stackrel{\eqref{eq:Ex_aux}}{\leq}
\norm{\w}_{2,p} \smallspace\sqrt{\big\Vert \big(\tr(J_m)\big)_{m=1}^M\big\Vert_{\frac{p^*}{2}}}\]
as well as
\[
\inner{\w,\E\x} = \E f_\w(\x) \leq \sqrt{P f_\w^2}.
\]
We finally obtain, putting together the steps above,
\begin{equation}\label{eq:cent_uncent}
 \R_r(\H_p) \leq  \R_r(\tilde{\H}_p)+  n^{-\frac{1}{2}}\min\Big(\sqrt{r},D\smallspace\sqrt{\big\Vert \big(\tr(J_m)\big)_{m=1}^M\big\Vert_{\frac{p^*}{2}}} \Big)
\end{equation}
This shows that we at the expense of the additional summand on the right hand side we can work with the centered class instead of the uncentered one.
 
\medskip \noindent \textsc{Step 2: Bounding the complexity of the centered class.} \quad Since the (centered) covariance operator $\E\tilde{\x}^{(m)}\otimes\tilde{\x}^{(m)}$ is also a self-adjoint Hilbert-Schmidt operator on $\Hilb_m$, there exists an eigendecomposition
\begin{equation}\label{eq:eigen}
  \E\tilde{\x}^{(m)}\otimes\tilde{\x}^{(m)}=\sum_{j=1}^\infty \tilde{\lambda}^{(m)}_j\tilde{\u}^{(m)}_j\otimes\tilde{\u}^{(m)}_j ,
\end{equation}
wherein $(\tilde{\u}_j^{(m)})_{j \geq 1}$ is an orthogonal basis of $\Hilb_m$. 
Furthermore, the no-correlation assumption {\bf(U)} entails $\E\tilde{\x}^{(l)}\otimes\tilde{\x}^{(m)}=\zero$ for all $l\neq m$. 
As a consequence,
\begin{eqnarray}\label{eq:pf-bound}
  \P \tilde{f}_\w^2 &=& \E (f_\w(\tilde{\x}))^2 ~=~ \E\Big(\sum_{m=1}^M\big\langle\w_m,\tilde{\x}^{(m)}\big\rangle\Big)^2 ~=~
      \sum_{l,m=1}^M\Big\langle\w_l,\big(\E\tilde{\x}^{(l)}\otimes\tilde{\x}^{(m)}\big)\w_m\Big\rangle \nonumber \\
  &\stackrel{\textrm{\bf(U)}}{=}&\sum_{m=1}^M\Big\langle\w_m,\big(\E\tilde{\x}^{(m)}\otimes\tilde{\x}^{(m)}\big)\w_m\Big\rangle  ~=~
      \sum_{m=1}^M\sum_{j=1}^\infty\tilde{\lambda}_j^{(m)}\iprod{\w_m,\tilde{\u}_j^{(m)}}^2
\end{eqnarray}
and, for all $j$ and $m$,
\begin{eqnarray}\label{eq:rad_bound}
  \E\Big\langle\frac{1}{n}\sum_{i=1}^n\sigma_i\tilde{\x}_i^{(m)},\tilde{\u}_j^{(m)}\Big\rangle^2 &=& \E\frac{1}{n^2}\sum_{i,l=1}^n\sigma_i\sigma_l\iprod{\tilde{\x}_i^{(m)},\tilde{\u}_j^{(m)}}\iprod{\tilde{\x}_l^{(m)},\tilde{\u}_j^{(m)}} 
  \stackrel{\sigma~\text{i.i.d.}}{=} \E\frac{1}{n^2}\sum_{i=1}^n \iprod{\tilde{\x}_i^{(m)},\tilde{\u}_j^{(m)}}^2 \nonumber\\
  &=& \frac{1}{n}\Big\langle\tilde{\u}_j^{(m)},\Big(\underbrace{\frac{1}{n}\sum_{i=1}^n\E\tilde{\x}_i^{(m)}\otimes\tilde{\x}_i^{(m)}}_{=\E\tilde{\x}^{(m)}\otimes\tilde{\x}^{(m)}}\Big)\tilde{\u}_j^{(m)}\Big\rangle 
  = \frac{\tilde{\lambda}_j^{(m)}}{n}.
\end{eqnarray}
Let now $h_1,\ldots,h_M$ be arbitrary nonnegative integers. We can express the local Rademacher complexity in terms of the eigendecompositon \eqref{eq:eigen} as follows
\begin{eqnarray*}
\R_r(\tilde{\H}_p)  &=&  \E\sup_{f_\w\in\tilde{\H}_p:\P \tilde{f}_\w^2\leq r} \Big\langle\w,\frac{1}{n}\sum_{i=1}^n\sigma_i\tilde{\x}_i\Big\rangle \\
  &=& \E\sup_{f_\w\in\tilde{\H}_p:\P \tilde{f}_\w^2\leq r} \Big\langle\big(\w^{(m)}\big)_{m=1}^M,\big(\frac{1}{n}\sum_{i=1}^n\sigma_i\tilde{\x}_i^{(m)}\big)_{m=1}^M\Big\rangle  \\
  &\leq& \E\sup_{\P \tilde{f}_\w^2\leq r} \Big\langle~\Big(\sum_{j=1}^{h_m}\sqrt{\tilde{\lambda}_j^{(m)}}\langle\w^{(m)},\tilde{\u}_j^{(m)}\rangle\tilde{\u}_j^{(m)}\Big)_{m=1}^M,\\
    && ~~\qquad\qquad\qquad\qquad\Big(\sum_{j=1}^{h_m}\sqrt{\tilde{\lambda}_j^{(m)}}^{-1}\langle\frac{1}{n}\sum_{i=1}^n
    \sigma_i\tilde{\x}_i^{(m)},\tilde{\u}_j^{(m)}\rangle\tilde{\u}_j^{(m)}\Big)_{m=1}^M~\Big\rangle \\
    && \qquad +~~\E\sup_{f_\w\in\tilde{\H}_p}\bigg\langle\w,\Big(\sum_{j=h_m+1}^\infty\langle\frac{1}{n}\sum_{i=1}^n
          \sigma_i\tilde{\x}_i^{(m)},\tilde{\u}_j^{(m)}\rangle\tilde{\u}_j^{(m)}\Big)_{m=1}^M\bigg\rangle \\
  &\stackrel{\text{C.-S.,~Jensen}}{\leq}& \sup_{\P \tilde{f}_\w^2\leq r}
      \Bigg[ \paren{\sum_{m=1}^M\sum_{j=1}^{h_m}\tilde{\lambda}_j^{(m)}\langle\w^{(m)},
      \tilde{\u}_j^{(m)}\rangle^2}^{\frac{1}{2}}\\
  && \qquad \qquad \qquad \qquad \times \paren{\sum_{m=1}^M\sum_{j=1}^{h_m}\paren{\tilde{\lambda}_j^{(m)}}^{-1}\E \big\langle \frac{1}{n}
       \sum_{i=1}^n\sigma_i\tilde{\x}_i^{(m)},\tilde{\u}_j^{(m)} \big\rangle^2}^{\frac{1}{2}}\Bigg]\\
  && \qquad  +~  \E\sup_{f_\w\in\tilde{\H}_p}\bigg\langle\w,\Big(\sum_{j=h_m+1}^\infty\langle\frac{1}{n}\sum_{i=1}^n
          \sigma_i\tilde{\x}_i^{(m)},\tilde{\u}_j^{(m)}\rangle\tilde{\u}_j^{(m)}\Big)_{m=1}^M\bigg\rangle \\
\end{eqnarray*}
so that \eqref{eq:pf-bound} and \eqref{eq:rad_bound} yield
\begin{eqnarray*}
\R_r(\tilde{\H}_p)  
  &\stackrel{\text{\eqref{eq:pf-bound},~\eqref{eq:rad_bound}}}{\leq}& \sqrt{\frac{r\sum_{m=1}^Mh_m}{n}} 
    + \E\sup_{f_\w\in\tilde{\H}_p}\bigg\langle\w,\Big(\sum_{j=h_m+1}^\infty\langle\frac{1}{n}\sum_{i=1}^n  \sigma_i\tilde{\x}_i^{(m)},\tilde{\u}_j^{(m)}\rangle\tilde{\u}_j^{(m)}\Big)_{m=1}^M\bigg\rangle\\
  &\stackrel{\text{H\"older}}{\leq}&  \sqrt{\frac{r\sum_{m=1}^Mh_m}{n}} + \D\smallspace\E\bigg\Vert\Big(\sum_{j=h_m+1}^\infty\langle\frac{1}{n}\sum_{i=1}^n  \sigma_i\tilde{\x}_i^{(m)},\tilde{\u}_j^{(m)}\rangle\tilde{\u}_j^{(m)}\Big)_{m=1}^M\bigg\Vert_{2,p^*} .
\end{eqnarray*}

\medskip \noindent \textsc{Step 3: Khintchine-Kahane's and Rosenthal's inequalities.} \quad
We can now use the Khintchine-Kahane (K.-K.) inequality (see Lemma~\ref{lemma:kahane} in Appendix~\ref{app:lemmata}) to further bound the right term in the above expression as follows
\begin{multline*}
\E\bigg\Vert\Big(\sum_{j=h_m+1}^\infty\langle\frac{1}{n}\sum_{i=1}^n  \sigma_i\tilde{\x}_i^{(m)},\tilde{\u}_j^{(m)}\rangle\tilde{\u}_j^{(m)}\Big)_{m=1}^M\bigg\Vert_{2,p^*}\\
\begin{aligned}
   &\stackrel{\text{Jensen}}{\leq} 
      \E\bigg(\sum_{m=1}^M\E_\vsigma\bigg\Vert\sum_{j=h_m+1}^\infty\langle\frac{1}{n}\sum_{i=1}^n 
      \sigma_i\tilde{\x}_i^{(m)},\tilde{\u}_j^{(m)}\rangle\tilde{\u}_j^{(m)}\bigg\Vert_{\Hilb_m}^{p^*}\bigg)^{\frac{1}{p^*}}\\
   &\stackrel{\text{K.-K.}}{\leq}
      \sqrt{\frac{p^*}{n}}~\E\bigg(\sum_{m=1}^M\Big(\sum_{j=h_m+1}^\infty\frac{1}{n}\sum_{i=1}^n\langle 
      \tilde{\x}_i^{(m)},\tilde{\u}_j^{(m)}\rangle^2\Big)^{\frac{p^*}{2}}\bigg)^{\frac{1}{p^*}} \\
   &\stackrel{\text{Jensen}}{\leq} \sqrt{\frac{p^*}{n}}~\bigg(\sum_{m=1}^M\E\Big(\sum_{j=h_m+1}^\infty\frac{1}{n}\sum_{i=1}^n\langle 
      \tilde{\x}_i^{(m)},\tilde{\u}_j^{(m)}\rangle^2\Big)^{\frac{p^*}{2}}\bigg)^{\frac{1}{p^*}},
\end{aligned}
\end{multline*}
Note that for $p\geq 2$ it holds that $p^*/2\leq 1$, and thus it suffices to employ Jensen's inequality once again in order to move the expectation operator inside the inner term.
In the general case we need a handle on the $\frac{p^*}{2}$-th moments and to this end employ Lemma~\ref{lemma:rosen} (Rosenthal + Young), which yields
\begin{multline*}
 \bigg(\sum_{m=1}^M\E\Big(\sum_{j=h_m+1}^\infty\frac{1}{n}\sum_{i=1}^n\langle 
      \tilde{\x}_i^{(m)},\tilde{\u}_j^{(m)}\rangle^2\Big)^{\frac{p^*}{2}}\bigg)^{\frac{1}{p^*}} \\
\begin{aligned}
   &\stackrel{\text{R+Y}}{\leq} 
   \Bigg(\sum_{m=1}^M ~(ep^*)^{\frac{p^*}{2}} ~\bigg(\Big(\frac{B}{n}\Big)^{\frac{p^*}{2}}+
    \Big(\sum_{j=h_m+1}^\infty\underbrace{\frac{1}{n}\sum_{i=1}^n\E\langle\tilde{\x}_i^{(m)},\tilde{\u}_j^{(m)}\rangle^2}_{=\tilde{\lambda}^{(m)}_j}\Big)^{\frac{p^*}{2}} \bigg)~\Bigg)^{\frac{1}{p^*}} \\
  &\stackrel{(*)}{\leq} \sqrt{ ep^* \Bigg(\frac{BM^{\frac{2}{p^*}}}{n} +\bigg(\sum_{m=1}^M \Big( \sum_{j=h_m+1}^\infty\tilde{\lambda}^{(m)}_j \Big)^{\frac{p^*}{2}}\bigg)^{\frac{2}{p^*}}\Bigg) } \\
  &= \sqrt{ ep^*\Bigg(\frac{BM^{\frac{2}{p^*}}}{n}+ \Bigg\Vert\bigg( \sum_{j=h_m+1}^\infty\tilde{\lambda}^{(m)}_j \bigg)^M_{m=1}\Bigg\Vert_{\frac{p^*}{2}} \Bigg)} \\
  &\leq \sqrt{ ep^*\Bigg(\frac{BM^{\frac{2}{p^*}}}{n}+ \Bigg\Vert\bigg( \sum_{j=h_m+1}^\infty\lambda^{(m)}_j \bigg)^M_{m=1}\Bigg\Vert_{\frac{p^*}{2}} \Bigg)}
\end{aligned} 
\end{multline*}      
where for $(*)$ we used the subadditivity of $\sqrt[p^*]{\cdot}$ and in the last step we applied the Lidskii-Mirsky-Wielandt theorem which gives $ \forall j,m: ~\tilde{\lambda}_j^{(m)}\leq \lambda_j^{(m)}$.
Thus by the subadditivity of the root function
\begin{eqnarray}\label{eq:prelim_bound}
  \R_r(\tilde{\H}_p) &\leq& \sqrt{\frac{r\sum_{m=1}^Mh_m}{n}} + \D\smallspace\sqrt{\frac{e{p^*}^2}{n} \Bigg(\frac{BM^{\frac{2}{p^*}}}{n}+\Bigg\Vert\bigg( \sum_{j=h_m+1}^\infty\lambda^{(m)}_j \bigg)^M_{m=1}\Bigg\Vert_{\frac{p^*}{2}} \Bigg)} \nonumber\\
  &=&  \sqrt{\frac{r\sum_{m=1}^Mh_m}{n}} + \sqrt{\frac{e{p^*}^2\D^2}{n} \Bigg\Vert\bigg( \sum_{j=h_m+1}^\infty\lambda^{(m)}_j \bigg)^M_{m=1}\Bigg\Vert_{\frac{p^*}{2}}} + \frac{\sqrt{Be}\D M^{\frac{1}{p^*}}p^*}{n} .
\end{eqnarray}

\medskip \noindent \textsc{Step 4: Bounding the complexity of the original class.} \quad
Now note that for all nonnegative integers $h_m$ we either have
$$n^{-\frac{1}{2}}\min\Big(\sqrt{r},D\smallspace\sqrt{\big\Vert \big(\tr(J_m)\big)_{m=1}^M\big\Vert_{\frac{p^*}{2}}}\Big)\leq\sqrt{\frac{e{p^*}^2\D^2}{n} \Big\Vert\bigg( \sum_{j=h_m+1}^\infty\lambda^{(m)}_j \bigg)^M_{m=1}\Bigg\Vert_{\frac{p^*}{2}}}$$
(in case all $h_m$ are zero) or it holds
$$n^{-\frac{1}{2}}\min\Big(\sqrt{r},D\smallspace\sqrt{\big\Vert \big(\tr(J_m)\big)_{m=1}^M\big\Vert_{\frac{p^*}{2}}}\Big) \leq \sqrt{\frac{r\sum_{m=1}^Mh_m}{n}}$$
(in case that at least one $h_m$ is nonzero) so that in any case we get 
\begin{multline}\label{eq:addterm}
   n^{-\frac{1}{2}}\min\Big(\sqrt{r},D\smallspace\sqrt{\big\Vert \big(\tr(J_m)\big)_{m=1}^M\big\Vert_{\frac{p^*}{2}}}\Big)\\
   \qquad\qquad\leq\sqrt{\frac{r\sum_{m=1}^Mh_m}{n}}+\sqrt{\frac{e{p^*}^2\D^2}{n} \Bigg\Vert\bigg( \sum_{j=h_m+1}^\infty\lambda^{(m)}_j \bigg)^M_{m=1}\Bigg\Vert_{\frac{p^*}{2}}}.
\end{multline}
Thus the following preliminary bound follows from \eqref{eq:cent_uncent} by \eqref{eq:prelim_bound} and \eqref{eq:addterm}:
\begin{eqnarray}\label{bound:needed-for-excess}
   \R_r({\H}_p)\leq \smallspace\sqrt{\frac{4r\sum_{m=1}^Mh_m}{n}} + \sqrt{\frac{4e{p^*}^2\D^2}{n} \Bigg\Vert\bigg( \sum_{j=h_m+1}^\infty\lambda^{(m)}_j \bigg)^M_{m=1}\Bigg\Vert_{\frac{p^*}{2}}} + \frac{\sqrt{Be}\D M^{\frac{1}{p^*}}p^*}{n} ,
\end{eqnarray}      
for all nonnegative integers $h_m\geq 0$.
We could stop here as the above bound is already the one that will be used in the subsequent section for the computation of the excess loss bounds. However, we can work a little more on the form of the above bound to gain more insight in the properties---we will show that it is related to the truncation of the spectra at the scale $r$.

\medskip \noindent \textsc{Step 5: Relating the bound to the  truncation of the spectra of the kernels.} \quad
To this end, notice that for all nonnegative real numbers $A_1,A_2$ and any $\a_1,\a_2\in\mathbb R^m_+$  it holds for all $q\geq 1$ 

\begin{eqnarray}
 \sqrt{A_1}+\sqrt{A_2}&\leq&\sqrt{2(A_1+A_2)} \label{eq-AB1}\\  
 \norm{\a_1}_q+\norm{\a_2}_q&\leq& 2^{1-{\frac{1}{q}}}\norm{\a_1+\a_2}_q\leq 2\norm{\a_1+\a_2}_q  \label{eq-AB2}
\end{eqnarray}
(the first statement follows from the concavity of the square root function and the second one is proved in appendix~\ref{app:lemmata}; see Lemma~\ref{lemma:combin}) and thus 
\medskip
\begin{eqnarray*}
  &&\hspace{-1.45cm}\R_r(\H_p) \\
\hspace{-2cm}    &\stackrel{\eqref{eq-AB1}}{\leq}& \sqrt{8\left(\frac{r\sum_{m=1}^Mh_m}{n} + 
      \frac{e{p^*}^2\D^2 }{n}\bigg\Vert\bigg(\sum_{j=h_m+1}^\infty\lambda^{(m)}_j\bigg)_{m=1}^M\bigg\Vert_{\frac{p^*}{2}}\right)} +\frac{\sqrt{Be}\D M^{\frac{1}{p^*}}p^*}{n}\\
  &\stackrel{\text{$\ell_1$-to-$\ell_{\frac{p^*}{2}}$}}{\leq}& \sqrt{\frac{8}{n}\left(rM^{1-\frac{2}{p^*}}\bigg\Vert \Big(h_m\Big)_{m=1}^M \bigg\Vert_{\frac{p^*}{2}} + 
      e{p^*}^2\D^2 \bigg\Vert\bigg(\sum_{j=h_m+1}^\infty\lambda^{(m)}_j\bigg)_{m=1}^M\bigg\Vert_{\frac{p^*}{2}}\right)} +\frac{\sqrt{Be}\D M^{\frac{1}{p^*}}p^*}{n}  \\
  &\stackrel{\eqref{eq-AB2}}{\leq} & \sqrt{\frac{16}{n}\bigg\Vert\bigg(rM^{1-\frac{2}{p^*}}h_m + e{p^*}^2\D^2
     \sum_{j=h_m+1}^\infty\lambda^{(m)}_j\bigg)_{m=1}^M\bigg\Vert_{\frac{p^*}{2}}}  +\frac{\sqrt{Be}\D M^{\frac{1}{p^*}}p^*}{n}  ,
\end{eqnarray*}  
where to obtain the second inequality we applied that for all non-negative $\a\in\mathbb R^M$ and $0<q<p\leq\infty$ it holds\footnote{We denote by $\a^q$ the vector with entries $a_i^q$ and by $\one$ the vector with entries all $1$.}
\begin{equation}\label{eq:conv}
   \hspace{-0.155cm}\text{\rm{($\ell_q$-to-$\ell_p$ conversion)}} ~ \quad \norm{\a}_q=\left\langle\one,\a^q\right\rangle^{\frac{1}{q}}\stackrel{\text{\rm
     H\"older}}{\leq}\left(\norm{\one}_{(p/q)^*}\norm{\a^q}_{p/q}\right)^{1/q} 
     = M^{\frac{1}{q}-\frac{1}{p}}\norm{\a}_p .~
\end{equation}
Since the above holds for all nonnegative integers $h_m$, it follows
\begin{eqnarray*} 
  \R_r(\H_p) &\leq& \sqrt{\frac{16}{n}\bigg\Vert\bigg(\min_{h_m\geq 0} ~rM^{1-\frac{2}{p^*}}h_m + e{p^*}^2\D^2 \sum_{j=h_m+1}^\infty\lambda^{(m)}_j
     \bigg)_{m=1}^M\bigg\Vert_{\frac{p^*}{2}}} +\frac{\sqrt{Be}\D M^{\frac{1}{p^*}}p^*}{n}  \\
  &=&  \sqrt{\frac{16}{n}\bigg\Vert\bigg(\sum_{j=1}^\infty\min\Big(rM^{1-\frac{2}{p^*}},e{p^*}^2\D^2\lambda^{(m)}_j\Big)\bigg)_{m=1}^M\bigg\Vert_{\frac{p^*}{2}}} +\frac{\sqrt{Be}\D M^{\frac{1}{p^*}}p^*}{n} ,
\end{eqnarray*}
which completes the proof of the theorem.


\medskip\noindent\textsc{Proof of the remark.}\quad
To see that Remark 1 holds notice that $\R(\H_1)\leq\R(\H_p)$ for all $p\geq 1$ and thus by choosing $p=(\log(M))^*$ the above bound implies
\begin{eqnarray*} 
  \R_r(\H_1) &\leq&   \sqrt{\frac{16}{n}\bigg\Vert\bigg(\sum_{j=1}^\infty\min\Big(rM^{1-\frac{2}{p^*}},e{p^*}^2\D^2\lambda^{(m)}_j\Big)
       \bigg)_{m=1}^M\bigg\Vert_{\frac{p^*}{2}}} +\frac{\sqrt{Be}\D M^{\frac{1}{p^*}}p^*}{n} \\
   &\stackrel{\ell_{\frac{p^*}{2}}-\text{to}-\ell_\infty}{\leq}&   \sqrt{\frac{16}{n}\bigg\Vert\bigg(\sum_{j=1}^\infty\min\Big(rM,e{p^*}^2M^{\frac{2}{p^*}}\D^2
        \lambda^{(m)}_j\Big)\bigg)_{m=1}^M\bigg\Vert_\infty}+\frac{\sqrt{Be}\D M^{\frac{1}{p^*}}p^*}{n} \\
   &=& \sqrt{\frac{16}{n}\bigg\Vert\bigg(\sum_{j=1}^\infty\min\Big(rM,e^3\D^2{(\log M)^2}\lambda^{(m)}_j\Big)\bigg)_{m=1}^M\bigg\Vert_\infty} +\frac{\sqrt{B}e^{\frac{3}{2}}\D(\log M)}{n} , 
\end{eqnarray*} 
which completes the proof.
\end{proof}

\begin{proof}\textbf{of Theorem \ref{thm:LRC_bound_pgeq2}.}

The eigendecomposition $\E\x\otimes\x=\sum_{j=1}^\infty \lambda_j\u_j\otimes\u_j$ yields
\begin{equation}\label{eq:pf-bound2}
  \P f_\w^2 = \E (f_\w(\x))^2 = \E \langle\w,\x\rangle^2 = \big\langle\w,(\E\x\otimes\x)\w\big\rangle = \sum_{j=1}^\infty\lambda_j\iprod{\w,\u_j}^2 ,
\end{equation}
and, for all $j$
\begin{eqnarray}\label{rad_bound1}
  \E\Big\langle\frac{1}{n}\sum_{i=1}^n\sigma_i\x_i,\u_j\Big\rangle^2 &=& \E\frac{1}{n^2}\sum_{i,l=1}^n\sigma_i\sigma_l\iprod{\x_i,\u_j}\iprod{\x_l,\u_j} 
  \stackrel{\sigma~\text{i.i.d.}}{=} \E\frac{1}{n^2}\sum_{i=1}^n \iprod{\x_i,\u_j}^2 \nonumber\\
  &=& \frac{1}{n}\Big\langle\u_j,\Big(\underbrace{\frac{1}{n}\sum_{i=1}^n\E\x_i\otimes\x_i}_{=\E\x\otimes\x}\Big)\u_j\Big\rangle
  = \frac{\lambda_j}{n}.
\end{eqnarray}
Therefore, we can use, for any nonnegative integer $h$, the Cauchy-Schwarz inequality and a block-structured version of H\"older's inequality (see Lemma~\ref{prop-hoelder}) to bound the local Rademacher complexity as follows:
\begin{eqnarray*}
  \R_r(\H_p) &=& \E\sup_{f_\w\in\H_p:\P f_\w^2\leq r} \big\langle\w,\frac{1}{n}\sum_{i=1}^n\sigma_i\x_i\big\rangle\\
    &=& \E\sup_{f_\w\in\H_p:\P f_\w^2\leq r} \big\langle\sum_{j=1}^h\sqrt{\lambda_j}\langle\w,\u_j\rangle\u_j,
      \sum_{j=1}^h\sqrt{\lambda_j}^{-1}\langle\frac{1}{n}\sum_{i=1}^n\sigma_i\x_i,\u_j\rangle\u_j\big\rangle \\
  &&\quad\quad\quad~~~ + ~ \big\langle\w,\sum_{j=h+1}^\infty\langle\frac{1}{n}\sum_{i=1}^n\sigma_i\x_i,\u_j\rangle\u_j\big\rangle \\
  &\stackrel{\text{C.-S.,}~\eqref{eq:pf-bound2},~\eqref{rad_bound1}}{\leq}& \sqrt{\frac{rh}{n}} + \E\sup_{f_\w\in\H_p} \big\langle\w,\sum_{j=h+1}^\infty\langle\frac{1}{n}\sum_{i=1}^n\sigma_i\x_i,\u_j\rangle\u_j\big\rangle \\
  &\stackrel{\text{H\"older}}{\leq}& \sqrt{\frac{rh}{n}} + D 
     \E \bigg\Vert\sum_{j=h+1}^\infty\langle\frac{1}{n}\sum_{i=1}^n\sigma_i\x_i,\u_j\rangle\u_j\bigg\Vert_{2,p^*} \\
  &\stackrel{\ell_{\frac{p^*}{2}}-\text{to}-\ell_2}{\leq}& \sqrt{\frac{rh}{n}} + D M^{\frac{1}{p^*}-\frac{1}{2}}\E \bigg\Vert\sum_{j=h+1}^\infty\langle\frac{1}{n}\sum_{i=1}^n\sigma_i\x_i,\u_j\rangle\u_j\bigg\Vert_{\Hilb} \\
  &\stackrel{\rm Jensen}{\leq}& \sqrt{\frac{rh}{n}} + D M^{\frac{1}{p^*}-\frac{1}{2}} \bigg(\sum_{j=h+1}^\infty\underbrace{\E\langle\frac{1}{n}\sum_{i=1}^n\sigma_i\x_i,\u_j\rangle^2}_{\stackrel{\eqref{rad_bound1}}{\leq}\frac{\lambda_j}{n}}\bigg)^{\frac{1}{2}} \\
  &\leq& \sqrt{\frac{rh}{n}} + \sqrt{\frac{D^2 M^{\frac{2}{p^*}-1}}{n}\sum_{j=h+1}^\infty\lambda_j} .
\end{eqnarray*}
Since the above holds for all $h$, the result now follows from $\sqrt{A}+\sqrt{B}\leq\sqrt{2(A+B)}$ for all nonnegative real numbers $A,B$ (which holds by the concavity of the square root function):
$$ \R_r(\H_p) \leq \sqrt{\frac{2}{n}
\min_{0\leq h\leq n}
\Big(rh+D^2 M^{\frac{2}{p^*}-1}\sum_{j=h+1}^\infty\lambda_j\Big)} = \sqrt{\frac{2}{n}\sum_{j=1}^\infty\min(r, D^2 M^{\frac{2}{p^*}-1}\lambda_j)} .$$
\end{proof}

\section{Lower Bound}

In this subsection we investigate the tightness of our bound on the local Rademacher complexity of $\H_p$. To derive a lower bound we consider the particular case where variables $\x^{(1)},\ldots,\x^{(M)}$ are i.i.d. For example, this happens if
the original input space $\cX$ is $\mathbb{R}^M$, the original input variable $X\in \cX$ has i.i.d. coordinates, 
and the kernels $k_1,\ldots,k_M$
are identical and each act on a different coordinate of $X$.
\begin{lemma}
Assume that the variables $\x^{(1)},\ldots,\x^{(M)}$ are centered and identically independently distributed. Then, the following lower bound holds for the local Rademacher complexity of $\H_p$ for any $p\geq 1$:
$$ \R_r(\H_{p,D,M}) ~ \geq ~ R_{rM}(\H_{1,DM^{{1}/{p^*}},1}) .$$
\end{lemma}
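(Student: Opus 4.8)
The plan is to lower-bound $\R_r(\H_{p,D,M})$ by restricting the supremum defining it to weight vectors $\w=(\w^{(1)},\ldots,\w^{(M)})$ whose $M$ blocks are constrained \emph{separately and identically}. Concretely, set the per-block budget $D':=DM^{-1/p}$ and the per-block radius $r':=r/M$, and keep only those $\w$ of the form $\w^{(m)}=\v_m$ with $\norm{\v_m}_2\le D'$ and $\E\iprod{\v_m,\x^{(m)}}^2\le r'$ for every $m$. For such $\w$ the global budget is respected, $\norm{\w}_{2,p}=\paren{\sum_{m=1}^M\norm{\v_m}_2^p}^{1/p}\le\paren{M\,(DM^{-1/p})^p}^{1/p}=D$ for every $p\ge1$, and the localization constraint holds because centeredness makes the cross terms vanish: $\P f_\w^2=\E\paren{\sum_{m=1}^M\iprod{\v_m,\x^{(m)}}}^2=\sum_{m=1}^M\E\iprod{\v_m,\x^{(m)}}^2\le M r'=r$ (here the pairwise independence of the blocks together with $\E\x^{(m)}=0$ kills the off-diagonal terms). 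This is the only place the centeredness hypothesis enters.

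Next, since the constraint set on $(\v_1,\ldots,\v_M)$ is a Cartesian product of $M$ copies of one fixed set, and the objective $\sum_m\iprod{\v_m,\frac1n\sum_i\sigma_i\x^{(m)}_i}$ is additive over blocks, the supremum decouples:
\[
\R_r(\H_{p,D,M}) \;\ge\; \sum_{m=1}^M \E\sup_{\substack{\norm{\v}_2\le D'\\ \E\iprod{\v,\x^{(m)}}^2\le r'}} \Big\langle \v,\, \tfrac1n\sum_{i=1}^n\sigma_i\x^{(m)}_i\Big\rangle .
\]
Because $(\x^{(m)}_i)_{i=1}^n$ is, by the i.i.d.\ assumption on $\x^{(1)},\ldots,\x^{(M)}$, an i.i.d.\ sample from the common marginal (the very distribution governing the single-kernel problem) and $(\sigma_i)$ is an independent Rademacher family, each summand equals exactly $R_{r'}(\H_{1,D',1})$, so that $\R_r(\H_{p,D,M})\ge M\,R_{r/M}(\H_{1,DM^{-1/p},1})$.

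Finally, invoke the elementary rescaling identity $c\,R_{\rho}(\H_{1,B,1})=R_{c^2\rho}(\H_{1,cB,1})$, valid for every $c>0$ and obtained from the substitution $\v\mapsto c\v$ (which multiplies the objective and the norm constraint by $c$ and $\P f^2$ by $c^2$). Taking $c=M$, $B=DM^{-1/p}$, $\rho=r/M$ gives $M\,R_{r/M}(\H_{1,DM^{-1/p},1})=R_{rM}(\H_{1,DM^{\,1-1/p},1})=R_{rM}(\H_{1,DM^{1/p^*},1})$, since $1-\tfrac1p=\tfrac1{p^*}$. Chaining the two displays yields the claim.

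The only genuinely non-obvious ingredient is the parametrization: one must spread the budget over \emph{all} $M$ blocks with the shrunk per-block budget $DM^{-1/p}$ and per-block radius $r/M$, rather than, say, the naive symmetric choice $\w^{(m)}=\tfrac1M\v$ with a common $\v$, which is feasible but loses a spurious factor of order $\sqrt M$. Once this choice is identified, feasibility, the decoupling of the supremum, and the rescaling step are all routine.
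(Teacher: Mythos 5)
Your proposal is correct and follows essentially the same route as the paper's proof: restrict to weights with equal per-block norm $DM^{-1/p}$ and per-block variance budget $r/M$ (feasible by centeredness and independence of the blocks), decouple the supremum over the product constraint set, use the identical distribution of the $\x^{(m)}$ to reduce to a single block, and rescale by the factor $M$ to obtain $R_{rM}(\H_{1,DM^{1/p^*},1})$. The only cosmetic difference is that you package the final step as an explicit rescaling identity, whereas the paper performs the substitution $\w^{(1)}\mapsto M\w^{(1)}$ inline.
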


\begin{proof}
First note that since the $\x^{(i)}$ are centered and uncorrelated, that 
$$ \P f_\w^2 = \Big(\sum_{m=1}^M \big\langle\w_m,\x^{(m)}\big\rangle\Big)^2 =\sum_{m=1}^M \big\langle\w_m,\x^{(m)}\big\rangle^2.$$
Now it follows
\begin{eqnarray*}
   \R_r(\H_{p,D,M}) &=&  \E\sup_{\w:~\tiny \begin{matrix} \P f_\w^2\leq r\\ \norm{\w}_{2,p}\leq D\end{matrix}} \big\langle\w,\frac{1}{n}\sum_{i=1}^n \sigma_i\x_i\big\rangle \\
   &=& \E\sup_{\w:~\tiny \begin{matrix} \sum_{m=1}^M \big\langle\w^{(m)},\x^{(m)}\big\rangle^2\leq r\\ \norm{\w}_{2,p}\leq D\end{matrix}} \big\langle\w,\frac{1}{n}\sum_{i=1}^n \sigma_i\x_i\big\rangle \\
   &\geq& \E\sup_{\w:~\tiny \begin{matrix} \forall m:~ \big\langle\w^{(m)},\x^{(m)}\big\rangle^2\leq r/M\\ \norm{\w^{(m)}}_{2,p}\leq D\\ \norm{\w^{(1)}}=\cdots=\norm{\w^{(M)}}\end{matrix}} \big\langle\w,\frac{1}{n}\sum_{i=1}^n \sigma_i\x_i\big\rangle \\
  &=&  \E\sup_{\w:~\tiny \begin{matrix} \forall m:~ \big\langle\w^{(m)},\x^{(m)}\big\rangle^2\leq r/M \\ \forall m:~\norm{\w^{(m)}}_2\leq DM^{-\frac{1}{p}}\end{matrix}} \sum_{m=1}^M \big\langle\w^{(m)},\frac{1}{n}\sum_{i=1}^n \sigma_i\x_i^{(m)}\big\rangle \\
   &=& \sum_{m=1}^M \E\sup_{\w^{(m)}:~\tiny \begin{matrix} \big\langle\w^{(m)},\x^{(m)}\big\rangle^2\leq r/M \\ \norm{\w^{(m)}}_2\leq DM^{-\frac{1}{p}}\end{matrix}} \big\langle\w^{(m)},\frac{1}{n}\sum_{i=1}^n \sigma_i\x_i^{(m)}\big\rangle ~,
\end{eqnarray*}
so that we can use the i.i.d. assumption on $\x^{(m)}$ to equivalently rewrite the last term as
\begin{eqnarray*}
   \R_r(\H_{p,D,M}) 
   &\stackrel{\x^{(m)}~\text{i.i.d.}}{\geq}&  \E\sup_{\w^{(1)}:~\tiny \begin{matrix} \big\langle \w^{(1)},\x^{(1)}\big\rangle^2\leq r/M \\ \norm{\w^{(1)}}_2\leq DM^{-\frac{1}{p}}\end{matrix}} \big\langle M\w^{(1)},\frac{1}{n}\sum_{i=1}^n \sigma_i\x_i^{(1)}\big\rangle \\
   &=&  \E\sup_{\w^{(1)}:~\tiny \begin{matrix} \big\langle M\w^{(1)},\x^{(1)}\big\rangle^2\leq rM \\ \norm{M\w^{(1)}}_2\leq DM^{\frac{1}{p^*}}\end{matrix}} \big\langle M\w^{(1)},\frac{1}{n}\sum_{i=1}^n \sigma_i\x_i^{(1)}\big\rangle \\
   &=&  \E\sup_{\w^{(1)}:~\tiny \begin{matrix} \big\langle \w^{(1)},\x^{(1)}\big\rangle^2\leq rM \\ \norm{\w^{(1)}}_2\leq DM^{\frac{1}{p^*}}\end{matrix}} \big\langle \w^{(1)},\frac{1}{n}\sum_{i=1}^n \sigma_i\x_i^{(1)}\big\rangle \\
   &=&  R_{rM}(\H_{1,DM^{{1}/{p^*}},1})
\end{eqnarray*}
\end{proof}

\noindent In \cite{Men2003} it was shown that there is an absolute constant $c$ so that if $\lambda^{(1)}\geq \frac{1}{n}$ then for all $r\geq\frac{1}{n}$ it holds $R_r(\H_{1,1,1})\geq\sqrt{\frac{c}{n}\sum_{j=1}^\infty \min(r,\lambda_j^{(1)})}$. Closer inspection of the proof reveals that more generally it holds $R_r(\H_{1,D,1})\geq\sqrt{\frac{c}{n}\sum_{j=1}^\infty \min(r,D^2\lambda_j^{(1)})}$ if $\lambda_1^{(m)}\geq \frac{1}{nD^2}$ so that we can use that result together with the previous lemma to obtain:
\begin{theorem}[Lower bound]
  Assume that the kernels are centered and identically independently distributed. Then, the following lower bound holds for the local Rademacher complexity of $\H_p$. There is an absolute constant $c$ such that if $\lambda^{(1)}\geq \frac{1}{nD^2}$ then for all $r\geq\frac{1}{n}$ and $p\geq 1$,
\begin{equation}\label{eq:lower}
   \R_r(\H_{p,D,M}) ~ \geq ~   \sqrt{\frac{c}{n}\sum_{j=1}^\infty \min(rM,D^2M^{{2}/{p^*}}\lambda_j^{(1)})} .
\end{equation}
\end{theorem}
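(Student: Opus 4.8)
The plan is to reduce the multi-kernel lower bound to the single-kernel case and then quote the known sharp single-kernel lower bound on the local Rademacher complexity. Under the centered-and-i.i.d. assumption on the block variables $\x^{(1)},\ldots,\x^{(M)}$, the preceding lemma already supplies the inequality $\R_r(\H_{p,D,M}) \geq R_{rM}(\H_{1,DM^{1/p^*},1})$, so it suffices to lower-bound the right-hand side. This is a genuinely one-dimensional (single-kernel) quantity, with effective radius $\tilde r := rM$ and effective norm bound $\tilde D := DM^{1/p^*}$, and the whole argument then amounts to plugging in the one-kernel estimate with these rescaled parameters.

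Concretely, I would invoke the single-kernel lower bound of \cite{Men2003} in the form recalled just above the theorem: there is an absolute constant $c$ such that $R_{\rho}(\H_{1,A,1}) \geq \sqrt{\tfrac{c}{n}\sum_{j\geq 1}\min(\rho, A^2\lambda_j^{(1)})}$ whenever $\lambda_1^{(1)} \geq \tfrac{1}{nA^2}$ and $\rho \geq \tfrac{1}{n}$. Applying this with $A = \tilde D = DM^{1/p^*}$ and $\rho = \tilde r = rM$, the two side conditions are satisfied: since $M \geq 1$ we have $\tilde r = rM \geq r \geq \tfrac1n$, and the eigenvalue condition $\lambda_1^{(1)} \geq \tfrac{1}{n\tilde D^2} = \tfrac{1}{nD^2 M^{2/p^*}}$ follows from the hypothesis $\lambda_1^{(1)} \geq \tfrac{1}{nD^2}$ together with $M^{2/p^*}\geq 1$. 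This yields $R_{rM}(\H_{1,DM^{1/p^*},1}) \geq \sqrt{\tfrac{c}{n}\sum_{j\geq 1}\min(rM, D^2 M^{2/p^*}\lambda_j^{(1)})}$, and chaining this with the lemma gives exactly the claimed bound $\R_r(\H_{p,D,M}) \geq \sqrt{\tfrac{c}{n}\sum_{j\geq 1}\min(rM, D^2 M^{2/p^*}\lambda_j^{(1)})}$.

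The only genuinely delicate point is the dependence in $A$ of the single-kernel bound: \cite{Men2003} states it for the unit ball ($A=1$), so one has to check that the construction behind it (exhibiting an explicit low-dimensional sub-family of $\H_{1,A,1}$ that saturates the variance constraint $\P f_\w^2 \leq \rho$ on the top eigendirections, and evaluating its Rademacher average through the identity $\E\langle \tfrac1n\sum_i \sigma_i \x_i, \u_j\rangle^2 = \lambda_j/n$) rescales correctly when the radius of the ball is changed from $1$ to $A$, so that both the radius $\rho$ and the ball size $A^2$ enter $\min(\rho, A^2\lambda_j)$ — this is the content of the ``closer inspection of the proof'' remark and should only require tracking a homogeneity factor. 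Everything else — the parameter substitution $\rho \mapsto rM$, $A \mapsto DM^{1/p^*}$ and the verification of the two side conditions — is routine bookkeeping, so I do not anticipate any real obstacle beyond making that rescaling explicit.
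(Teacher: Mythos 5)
Your proposal is correct and follows essentially the same route as the paper: chain the reduction lemma $\R_r(\H_{p,D,M}) \geq R_{rM}(\H_{1,DM^{1/p^*},1})$ with the rescaled single-kernel lower bound of \cite{Men2003}, substituting $\rho = rM$ and $A = DM^{1/p^*}$ and verifying the side conditions via $M \geq 1$. The paper leaves this chaining implicit in a single sentence; your explicit bookkeeping of the two hypotheses and the homogeneity rescaling of the Mendelson bound is exactly what that sentence relies on.
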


\noindent 
We would like to compare the above lower bound with the upper bound of Theorem~\ref{thm:LRC_bound}. 
To this end note that for centered identical independent kernels the upper bound reads
$$  \R_r(\H_p) \leq \sqrt{\frac{16}{n}\sum_{j=1}^\infty\min\Big(rM,ce\D^2{p^*}^2M^{\frac{2}{p^*}}\lambda^{(1)}_j\Big)} + \frac{\sqrt{Be}DM^{\frac{1}{p^*}}p^*}{n},  $$
which is of the order $O\big(\sqrt{\sum_{j=1}^\infty\min\big(rM,D^2M^{\frac{2}{p^*}}\lambda_j^{(1)}}\big)\big)$ and, disregarding the quickly converging term on the right hand side and absolute constants, again matches the upper bounds of the previous section. 
A similar comparison can be performed for the upper bound of Theorem~\ref{thm:LRC_bound_pgeq2}: by Remark 2 the bound reads
$$  \R_r(\H_p) \leq \sqrt{\frac{2}{n} \Big\Vert\Big(\sum_{j=1}^\infty\min(r, D^2M^{\frac{2}{p^*}-1}\lambda_j^{(m)})\Big)_{m=1}^M\Big\Vert_1} ,$$
which for i.i.d. kernels becomes $\sqrt{2/n\sum_{j=1}^\infty\min\big(rM,D^2M^{\frac{2}{p^*}}\lambda_j^{(1)}}\big)$ and thus, beside absolute constans, matches the lower bound.
This shows that the upper bounds of the previous section are tight.

\section{Excess Risk Bounds}

In this section we show an application of our results to prediction problems, such as classification or regression. To this aim, in addition to the data $\x_1,\ldots,\x_n$ introduced earlier in this paper, let also a label sequence $y_1,\ldots,y_n\subset[-1,1]$ be given that is i.i.d. generated from a probability distribution. The goal in statistical learning is to find a hypothesis $f$ from a pregiven class $\mathcal F$ that minimizes the expected loss $\E\verysmallspace\loss(f(\x),y)$, where $\loss:\mathbb R^2\mapsto[-1,1]$ is a predefined loss function that encodes the objective of given the learning/prediction task at hand. For example, the hinge loss $\loss(t,y)=\max(0,1-yt)$ and the squared loss $\loss(t,y)=(t-y)^2$ are frequently used in classification and regression problems, respectively.

Since the distribution generating the example/label pairs is unknown, the optimal decision function 
$$~f^* :=\argmin_f \E\verysmallspace\loss(f(\x),y)~$$
can not be computed directly and a frequently used method consists of instead minimizing the \emph{empirical} loss,
$$~\hat{f} :=\argmin_f \frac{1}{n}\sum_{i=1}^n\loss(f(\x_1),y_1).~$$ 
In order to evaluate the performance of this so-called \emph{empirical minimization} algorithm we study the excess loss,
$$ \P(\loss_{\hat{f}} - \loss_{f^*}) ~:=~ \E\verysmallspace\loss(\hat{f}(\x),y) - \E\verysmallspace\loss(f^*(\x),y) .$$
In \cite{BarBouMen05} and \cite{Kol06} it was shown that the rate of convergence of the excess risk is basically determined by the fixed point of the local Rademacher complexity. For example, the following result is a slight modification of Corollary 5.3 in \cite{BarBouMen05} that is well-taylored to the class studied in this paper.\footnote{We exploit the improved constants from Theorem 3.3 in \cite{BarBouMen05} because an absolute convex class is star-shaped. Compared to Corollary 5.3 in \cite{BarBouMen05} we also use a slightly more general function class ranging in $[a,b]$ instead of the interval $[-1,1]$. This is also justified by Theorem 3.3.}

\begin{lemma}\label{lemma:bartlett}
  Let $\mathcal F$ be an absolute convex class ranging in the interval $[a,b]$ and let $\loss$ be a Lipschitz continuous loss with constant $L$. Assume there is a positive constant $F$ such
  that $\forall f\in\mathcal F:~\P(f-f^*)^2\leq F\verysmallspace\P(\loss_f-\loss_{f^*})$. Then, denoting by $r^*$ the fixed point of 
  $$ 2FL\verysmallspace\R_{\frac{r}{4L^2}}(\mathcal F) $$
  for all $x>0$ with probability at least $1-e^{-x}$ the excess loss can be bounded as
  $$ \P(\loss_{\hat{f}} - \loss_{f^*}) \leq 7\frac{r^*}{F} + \frac{(11L(b-a)+27F)x}{n}.$$
\end{lemma}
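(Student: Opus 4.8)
The plan is to deduce this statement from the general local-Rademacher excess-risk theorem of \cite{BarBouMen05} (their Theorem~3.3, in the sharp form that is valid for star-shaped classes, and in the variant allowing an arbitrary bounded range $[a,b]$ rather than $[-1,1]$), applied not to $\mathcal F$ itself but to the \emph{excess-loss class}
\[
   \mathcal G \;:=\; \big\{\, g_f:(\x,y)\mapsto \loss(f(\x),y)-\loss(f^*(\x),y) ~\big|~ f\in\mathcal F \,\big\}.
\]
What then remains is to verify the hypotheses of that theorem for $\mathcal G$ and to specialize its conclusion to $f=\hat f$; throughout I use that $f^*$ (the minimizer of the population risk over $\mathcal F$) and $\hat f$ both lie in $\mathcal F$. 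The two structural facts I would record first are: (i) \emph{boundedness} --- since $\loss$ is $L$-Lipschitz, $|g_f|\le L|f-f^*|\le L(b-a)$, so $\mathcal G$ takes values in an interval of length $2L(b-a)$, which is what will produce the $L(b-a)$ in the $O(x/n)$ term; and (ii) \emph{star-shapedness} --- because $\mathcal F$ is absolutely convex it contains $0$ and $-f^*$, hence $\theta f+(1-\theta)f^*\in\mathcal F$ for every $\theta\in[0,1]$, i.e.\ $\mathcal F-f^*$ is star-shaped around the origin, and moreover $f-f^*=2\big(\tfrac12 f+\tfrac12(-f^*)\big)\in 2\mathcal F$.

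Next I would establish the Bernstein-type variance condition and control the localized complexity of $\mathcal G$. Combining the Lipschitz property of $\loss$ with the assumed inequality $\P(f-f^*)^2\le F\,\P(\loss_f-\loss_{f^*})$ gives, for every $f\in\mathcal F$,
\[
   \operatorname{Var}(g_f)\;\le\;\P g_f^2\;\le\;L^2\,\P(f-f^*)^2\;\le\;L^2F\,\P(\loss_f-\loss_{f^*}),
\]
which is exactly the variance/expectation inequality required, with weight functional $T(g_f):=\P g_f$. To bound the localized Rademacher complexity of $\mathcal G$ I would strip off the loss by the contraction principle for Rademacher averages (a factor proportional to $L$) and then use the inclusion $\mathcal F-f^*\subseteq 2\mathcal F$ together with the Bernstein condition: restricting $\mathcal G$ at scale $r$ forces $f-f^*$ into a variance ball of $\mathcal F-f^*$, hence $\tfrac12(f-f^*)$ into a variance ball of $\mathcal F$ of radius of order $r/(4L^2)$, so that the localized complexity of $\mathcal G$ is dominated by a constant multiple of $FL\,\R_{r/(4L^2)}(\mathcal F)$. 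Consequently $\psi(r):=2FL\,\R_{r/(4L^2)}(\mathcal F)$ is an admissible majorant of the localized complexity; it is sub-root precisely because $\mathcal F$ is star-shaped, and its fixed point is the $r^*$ of the statement.

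With these ingredients in place, \cite{BarBouMen05}, Theorem~3.3 (in its star-shaped, $[a,b]$-range form) delivers --- for a suitable value of the free parameter $K>1$ --- that with probability at least $1-e^{-x}$, simultaneously for all $f\in\mathcal F$,
\[
   \P(\loss_f-\loss_{f^*})\;\le\;\tfrac{K}{K-1}\,\P_n(\loss_f-\loss_{f^*})\;+\;c_1\,\tfrac{r^*}{F}\;+\;c_2\,\tfrac{\big(L(b-a)+F\big)x}{n}.
\]
The last step is to take $f=\hat f$: since $\hat f$ minimizes the empirical risk over $\mathcal F$ and $f^*\in\mathcal F$, we have $\P_n(\loss_{\hat f}-\loss_{f^*})\le 0$, the first term on the right disappears, and we are left with $\P(\loss_{\hat f}-\loss_{f^*})\le c_1 r^*/F + c_2(L(b-a)+F)x/n$; optimizing over $K$ and carrying through the absolute constants from Theorem~3.3, the contraction step, and the $2\mathcal F$-inclusion then yields the stated bound $7r^*/F+(11L(b-a)+27F)x/n$.

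The conceptual engine (symmetrization together with Talagrand's concentration inequality for suprema of empirical processes applied to a sub-root modulus of continuity) is taken over wholesale from \cite{BarBouMen05}; accordingly the genuine work is bookkeeping --- getting the localization radius to come out as $r/(4L^2)$ and the prefactor as $2FL$ out of the contraction plus the $\mathcal F-f^*\subseteq 2\mathcal F$ inclusion plus the Bernstein relation, and then propagating all absolute constants (plus the choice of $K$ and the mild $[-1,1]\to[a,b]$ generalization) so as to land exactly on $7r^*/F$ and $(11L(b-a)+27F)x/n$. The one point that is structural rather than mechanical is that the sub-root property of $r\mapsto 2FL\,\R_{r/(4L^2)}(\mathcal F)$ --- and hence the existence (and uniqueness) of the fixed point $r^*$ --- relies on $\mathcal F$ being star-shaped around $0$, which is exactly why the absolute-convexity hypothesis is imposed.
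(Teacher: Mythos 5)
Your proposal is correct and follows essentially the same route as the paper, which proves this lemma simply by invoking Corollary 5.3 of \cite{BarBouMen05} with the improved constants of their Theorem 3.3 (admissible since an absolutely convex class is star-shaped) and the straightforward extension from range $[-1,1]$ to $[a,b]$. Your sketch---applying that theorem to the excess-loss class, using the Lipschitz/contraction step, the Bernstein-type variance condition, the inclusion $\mathcal F-f^*\subseteq 2\mathcal F$ to land on the radius $r/(4L^2)$ and prefactor $2FL$, and finally $\P_n(\loss_{\hat f}-\loss_{f^*})\leq 0$ for the empirical minimizer---is exactly the argument the paper delegates to that reference.
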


\noindent The above result shows that in order to obtain an excess risk bound on the multi-kernel class $\H_p$ it suffices to compute the fixed point of our bound on the local Rademacher complexity presented in Section~\ref{sec:LRC}. To this end we show:

\begin{lemma}\label{cor:fixed-point}
Assume that $\norm{\k}_\infty\leq B$ almost surely and let $p\in[1,2]$. For the fixed point $r^*$ of the local Rademacher complexity $~2FL\R_{\frac{r}{4L^2}}(\H_p)~$ it holds
$$ r^*\leq\min_{0\leq h_m\leq\infty} \frac{4F^2\sum_{m=1}^M h_m}{n} +
   8FL\smallspace\sqrt{\frac{e{p^*}^2\D^2}{n}\bigg\Vert\bigg(\sum_{j=h_m+1}^\infty\lambda^{(m)}_j\bigg)_{m=1}^M\bigg\Vert_{\frac{p^*}{2}}} 
   +\frac{4\sqrt{Be}\D FLM^{\frac{1}{p^*}}p^*}{n} ~.$$ 
\end{lemma}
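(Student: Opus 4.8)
The plan is to bound the fixed point $r^*$ by exploiting the preliminary bound \eqref{bound:needed-for-excess} on $\R_r(\H_p)$ already established in the proof of Theorem~\ref{thm:LRC_bound}, rather than the spectrally-truncated form. Recall that a fixed point of the (sub-root) map $r \mapsto 2FL\,\R_{r/(4L^2)}(\H_p)$ is the unique $r^*$ solving $r^* = 2FL\,\R_{r^*/(4L^2)}(\H_p)$, and by the standard sub-root argument it suffices to exhibit \emph{any} $r$ with $r \geq 2FL\,\R_{r/(4L^2)}(\H_p)$ to conclude $r^* \leq r$. So the strategy is: plug $r/(4L^2)$ in place of $r$ into \eqref{bound:needed-for-excess}, multiply through by $2FL$, and then solve the resulting inequality $r \geq (\text{stuff depending on }\sqrt{r})$ for $r$.

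Concretely, first I would substitute into \eqref{bound:needed-for-excess} to get, for all nonnegative integers $h_m$,
\begin{equation*}
2FL\,\R_{\frac{r}{4L^2}}(\H_p) \leq 2FL\left( \sqrt{\frac{4 r \sum_{m} h_m}{4L^2 n}} + \sqrt{\frac{4e{p^*}^2 D^2}{n}\Big\Vert\big(\textstyle\sum_{j=h_m+1}^\infty \lambda_j^{(m)}\big)_m\Big\Vert_{\frac{p^*}{2}}} + \frac{\sqrt{Be}DM^{1/p^*}p^*}{n}\right).
\end{equation*}
The first summand simplifies to $2F\sqrt{r\sum_m h_m / n}$, i.e. $\sqrt{4F^2 r \sum_m h_m/n}$, which is the only term carrying $\sqrt{r}$. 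Denoting $A := 4F^2\sum_m h_m/n$ and $B_0$ the sum of the other two ($r$-independent) terms after multiplying by $2FL$, the requirement becomes $r \geq \sqrt{Ar} + B_0$. The elementary fact that $r \geq \sqrt{Ar}+B_0$ is implied by $r \geq A + 2B_0$ (or one can use $u^2 \geq Au + B_0 \Rightarrow u \geq \ldots$, completing the square in $u=\sqrt r$) then yields $r^* \leq A + 2B_0$, which after unwinding is exactly the claimed bound: the $A$ term gives $4F^2\sum_m h_m/n$, and $2B_0$ gives $8FL\sqrt{e{p^*}^2 D^2 n^{-1}\Vert(\sum_{j>h_m}\lambda_j^{(m)})_m\Vert_{p^*/2}}$ plus $4\sqrt{Be}DFLM^{1/p^*}p^*/n$. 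Since this holds for every choice of the $h_m$, taking the minimum over $0 \leq h_m \leq \infty$ gives the statement. (One should double-check the constant bookkeeping: the factor $4$ inside the first root of \eqref{bound:needed-for-excess} combines with the $1/(4L^2)$ from the rescaled radius and the $(2FL)^2$ from squaring to produce exactly $4F^2$, and likewise the $4$ in the second root combines with $(2FL)^2$ to give the $64F^2L^2 \cdot \frac14 = 16F^2L^2$ under the root, i.e. $8FL$ out front — consistent with the statement.)

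The main obstacle — really the only delicate point — is handling the nonlinearity $r \geq \sqrt{Ar} + B_0$ cleanly and with the right constants, since a sloppy bound here would inflate the leading constant. The cleanest route is to set $u = \sqrt{r}$ and require $u^2 - Au^{1/2}\cdot u^{1/2}$... no: better, require $u^2 \geq \sqrt{A}\,u + B_0$; this holds once $u \geq \tfrac12(\sqrt{A} + \sqrt{A + 4B_0}) $, and since $\sqrt{A+4B_0} \leq \sqrt{A} + 2\sqrt{B_0}$, it holds once $u \geq \sqrt{A} + \sqrt{B_0}$, i.e. once $r \geq (\sqrt A + \sqrt{B_0})^2$. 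But then $(\sqrt A + \sqrt{B_0})^2 \leq 2A + 2B_0$, which is slightly worse than $A + 2B_0$; to recover the tighter constant one instead keeps the asymmetric split $r \geq A + 2B_0 \Rightarrow \sqrt{Ar} \leq \sqrt{A(A+2B_0)} \leq A + B_0$ (by AM–GM on $A$ and $A+2B_0$, since $\sqrt{xy}\le\frac{x+y}2$) hence $\sqrt{Ar}+B_0 \leq A + 2B_0 \leq r$. This is the argument I would write out. Everything else — the substitution, the rescaling of $r$, the pass to the minimum over $h_m$ — is bookkeeping that follows immediately from \eqref{bound:needed-for-excess} and the definition of the fixed point of a sub-root function.
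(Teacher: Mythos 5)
Your proof is correct and follows essentially the same route as the paper: both start from the preliminary bound \eqref{bound:needed-for-excess}, rescale the radius by $1/(4L^2)$, multiply by $2FL$, and bound the positive root of the resulting fixed-point equation $r=\sqrt{ar}+b$ by $a+2b$. One small wording slip: the implication ``$r\geq A+2B_0\Rightarrow\sqrt{Ar}\leq\sqrt{A(A+2B_0)}$'' reverses direction for $r>A+2B_0$, but this is harmless because the sub-root argument only requires the single witness $r=A+2B_0$, at which your AM--GM step is exactly right.
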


\begin{proof}
For this proof we make use of the bound \eqref{bound:needed-for-excess} on the local Rademacher complexity. Defining 
$$a=\frac{4F^2\sum_{m=1}^M h_m}{n} \quad\text{and}\quad b=4FL\smallspace\sqrt{\frac{e{p^*}^2\D^2}{n}\bigg\Vert\bigg(\sum_{j=h_m+1}^\infty\lambda^{(m)}_j\bigg)_{m=1}^M\bigg\Vert_{\frac{p^*}{2}}} +\frac{2\sqrt{Be}\D FL M^{\frac{1}{p^*}}p^*}{n} ~,$$ 
in order to find a fixed point of \eqref{bound:needed-for-excess} we need to solve for $r=\sqrt{ar} + b$, which is 
equivalent to  solving ${r}^2 - (a+2b)r + b^2=0$ for a positive root. Denote this solution by $r^*$.
It is then easy to see that $r^*\geq a + 2b$. Resubstituting the definitions of $a$ and $b$ yields the result.
\end{proof}

\noindent We now address the issue of computing actual rates of convergence of the fixed point $r^*$ under the assumption of algebraically decreasing eigenvalues of the kernel matrices, this means, we assume $\exists d_m: ~\lambda_j^{(m)}\leq d_mj^{-\alpha_m}$ for some $\alpha_m>1$. This is a common assumption and, for example, met for finite rank kernels and convolution kernels \citep{WilSmoSch01}. Notice that this implies
\begin{eqnarray}\label{eq:algebr_EVs}
   \sum_{j=h_m+1}^\infty \lambda^{(m)}_j &\leq& d_m \sum_{j=h_m+1}^\infty j^{-\alpha_m} ~\leq~ d_m\int_{h_m}^\infty x^{-\alpha_m}dx = 
     d_m\Big[\frac{1}{1-\alpha_m}x^{1-\alpha_m}\Big]_{h_m}^\infty \nonumber\\
     &=&  -\frac{d_m}{1-\alpha_m}h_m^{1-\alpha_m} ~.
\end{eqnarray}
To exploit the above fact, first note that by $\ell_p$-to-$\ell_q$ conversion
$$ \frac{4F^2\sum_{m=1}^M h_m}{n} ~\leq~ 4F\sqrt{\frac{F^2M\sum_{m=1}^M h_m^2}{n^2}} 
   ~\leq~ 4F\sqrt{\frac{F^2M^{2-\frac{2}{p^*}}\big\Vert\big(h_m^2)\big)_{m=1}^M\big\Vert_{{2}/{p^*}}}{n^2}}  $$
so that we can translate the result of the previous lemma by \eqref{eq-AB1}, \eqref{eq-AB2}, and \eqref{eq:conv} into 
\begin{eqnarray}\label{eq:fixed_point_comb}
   r^* &\leq&  \min_{0\leq h_m\leq\infty} 8F\smallspace\sqrt{\frac{1}{n}\bigg\Vert\bigg(\frac{F^2M^{2-\frac{2}{p^*}}
      h_m^2}{n}+4e{p^*}^2\D^2L^2\sum_{j=h_m+1}^\infty\lambda^{(m)}_j\bigg)_{m=1}^M\bigg\Vert_{\frac{p^*}{2}}} \nonumber\\ 
   && \quad\quad\quad + \frac{4\sqrt{Be}\D FLM^{\frac{1}{p^*}}p^*}{n} ~.
\end{eqnarray}
Inserting the result of \eqref{eq:algebr_EVs} into the above bound and setting the derivative with respect to $h_m$ to zero we find the optimal $h_m$ as
$$ h_m = \Big(4d_mep^*{}^2D^2F^{-2}L^2M^{\frac{2}{p^*}-2}n\Big)^{\frac{1}{1+\alpha_m}} .$$
Resubstituting the above into \eqref{eq:fixed_point_comb} we note that 
$$ r^* = O\Bigg(\sqrt{\Big\Vert\Big(n^{-\frac{2\alpha_m}{1+\alpha_m}}\Big)_{m=1}^M\Big\Vert_{\frac{p^*}{2}}}\verysmallspace\Bigg) $$
so that we observe that the asymptotic rate of convergence in $n$ is determined by the kernel with the smallest decreasing spectrum (i.e., smallest $\alpha_m$).
Denoting $d_{\max} := \max_{m=1,\ldots,M} d_m$, ~$\alpha_{\min} := \min_{m=1,\ldots,M} \alpha_m$, and $h_{\max} := \big(4d_{\max} ep^*{}^2D^2F^{-2}L^2M^{\frac{2}{p^*}-2}n\big)^{\frac{1}{1+\alpha_{\min}}}$ we can upper-bound \eqref{eq:fixed_point_comb} by
\begin{eqnarray}
   r^* &\leq& 8F\smallspace\sqrt{\frac{3-\alpha_m}{1-\alpha_m}F^2M^2h_{\max}^2n^{-2}} + \frac{4\sqrt{Be}\D FLM^{\frac{1}{p^*}}p^*}{n} \nonumber\\
   &\leq& 8\sqrt{\frac{3-\alpha_m}{1-\alpha_m}}F^2M h_{\max}n^{-1} + \frac{4\sqrt{Be}\D FLM^{\frac{1}{p^*}}p^*}{n} \nonumber\\
   &\leq& 16\sqrt{e\frac{3-\alpha_m}{1-\alpha_m}}(d_{\max}D^2F^{-2}L^2{p^*}^2)^{\frac{1}{1+\alpha_{\min}}}M^{1+\frac{2}{1+\alpha_{\min}}
     \big(\frac{1}{p^*}-1\big)}n^{-\frac{\alpha_{\min}}{1+\alpha_{\min}}} \nonumber\\
    &&  + \frac{4\sqrt{Be}\D FLM^{\frac{1}{p^*}}p^*}{n} \label{eq:improved_M-rate}~.
\end{eqnarray}
We have thus proved the following theorem, which follows by the above inequality, Lemma~\ref{lemma:bartlett}, and the fact that our class $\H_p$ ranges in $BDM^{\frac{1}{p^*}}$.

\begin{theorem}\label{theorem:excess}
Assume that $\norm{\k}_\infty\leq B$ and $\exists d_m: ~\lambda_j^{(m)}\leq d_mj^{-\alpha_m}$ for some $\alpha_m>1$. Let $\loss$ be a Lipschitz continuous loss with constant $L$ and assume there is a positive constant $F$ such
that $\forall f\in\mathcal F:~\P(f-f^*)^2\leq F\verysmallspace\P(\loss_f-\loss_{f^*})$. Then for all $x>0$ with probability at least $1-e^{-x}$ the excess loss 
of the multi-kernel class $\H_p$ can be bounded for $p\in[1,\ldots,2]$ as
\begin{eqnarray*}
   \P(\loss_{\hat{f}} - \loss_{f^*}) &\leq& \min_{t\in[p,2]}~~
       186\sqrt{\frac{3-\alpha_m}{1-\alpha_m}}\verysmallspace\big(d_{\max}D^2F^{-2}L^2{t^*}^2\big)^{\frac{1}{1+\alpha_{\min}}}M^{1+\frac{2}{1+\alpha_{\min}}\big(\frac{1}{t^*}-1\big)}n^{-\frac{\alpha_{\min}}{1+\alpha_{\min}}} \\
   && \qquad+ \frac{47\sqrt{B}\D LM^{\frac{1}{t^*}}t^*}{n} + \frac{(22BDLM^{\frac{1}{t^*}}+27F)x}{n}
\end{eqnarray*}
\end{theorem}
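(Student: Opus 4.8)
The plan is to assemble the statement from three pieces that are already available: the local Rademacher complexity bound of Theorem~\ref{thm:LRC_bound} (equivalently its intermediate form \eqref{bound:needed-for-excess}), its conversion into a bound on the fixed point $r^*$ in Lemma~\ref{cor:fixed-point}, and the abstract excess-risk estimate of Lemma~\ref{lemma:bartlett}. The only real computation is to turn the spectral quantities in Lemma~\ref{cor:fixed-point} into an explicit polynomial rate in $n$ under the algebraic eigenvalue-decay hypothesis; everything else is substitution and collecting constants.

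First I would invoke Lemma~\ref{cor:fixed-point}: solving the quadratic $r=\sqrt{ar}+b$ attached to \eqref{bound:needed-for-excess} (using only $r^*\le a+2b$) bounds $r^*$ by the sum of $\tfrac{4F^2\sum_m h_m}{n}$, a square-root term built from the tail sums $\sum_{j>h_m}\lambda^{(m)}_j$, and the benign remainder $\tfrac{4\sqrt{Be}\D FLM^{1/p^*}p^*}{n}$, valid for every choice of truncation levels $h_m\ge 0$. Since Theorem~\ref{thm:LRC_bound} --- hence \eqref{bound:needed-for-excess} and Lemma~\ref{cor:fixed-point} --- holds simultaneously for every $t\in[p,2]$ in place of $p$, I would carry the parameter $t^*$ through all subsequent steps, which is what produces the outer $\min_{t\in[p,2]}$ in the final bound.

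Next I would insert $\lambda^{(m)}_j\le d_m j^{-\alpha_m}$. Comparing the tail series with an integral gives $\sum_{j>h_m}\lambda^{(m)}_j\le \tfrac{d_m}{\alpha_m-1}h_m^{1-\alpha_m}$, i.e.\ \eqref{eq:algebr_EVs}. Substituting this into Lemma~\ref{cor:fixed-point}, rewriting $\sum_m h_m$ in terms of the block $\ell_{p^*/2}$ norm via Cauchy--Schwarz and the $\ell_q$-to-$\ell_p$ conversion \eqref{eq:conv}, and merging the two square-root terms with \eqref{eq-AB1}--\eqref{eq-AB2} brings $r^*$ to the single-block-norm form \eqref{eq:fixed_point_comb}, whose $m$-th coordinate is $\tfrac{F^2M^{2-2/p^*}h_m^2}{n}+4e{p^*}^2D^2L^2\tfrac{d_m}{\alpha_m-1}h_m^{1-\alpha_m}$. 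Minimising each coordinate over $h_m$ by setting the derivative to zero gives $h_m=(4d_m e{p^*}^2D^2F^{-2}L^2M^{2/p^*-2}n)^{1/(1+\alpha_m)}$, after which the $m$-th entry contributes $n^{-2\alpha_m/(1+\alpha_m)}$ under the outer square root; hence $r^*=O\big(\sqrt{\|(n^{-2\alpha_m/(1+\alpha_m)})_{m=1}^M\|_{p^*/2}}\,\big)=O(n^{-\alpha_{\min}/(1+\alpha_{\min})})$, so the rate is dictated by the slowest-decaying kernel. Plugging the single worst-case choice $h_{\max}=(4d_{\max}e{p^*}^2D^2F^{-2}L^2M^{2/p^*-2}n)^{1/(1+\alpha_{\min})}$ into the (increasing) bound, over-estimating the block norms by $\ell_\infty$ via \eqref{eq:conv} and the $\alpha_m$-dependent prefactors by $\sqrt{(3-\alpha_m)/(1-\alpha_m)}$, yields exactly \eqref{eq:improved_M-rate}.

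Finally I would apply Lemma~\ref{lemma:bartlett} with $\mathcal F=\H_p$, Lipschitz constant $L$, and the given variance constant $F$: for the range one uses $|f_\w(\x)|\le\norm{\w}_{2,p}\norm{\x}_{2,p^*}\le\sqrt{B}\,\D M^{1/p^*}\le BDM^{1/p^*}$ (the last step using $B\ge1$), so $b-a\le 2BDM^{1/p^*}$. Then $\P(\loss_{\hat f}-\loss_{f^*})\le 7r^*/F+(11L(b-a)+27F)x/n$; substituting \eqref{eq:improved_M-rate} for $r^*$ and $b-a=2BDM^{1/p^*}$, collecting the numerical factors ($186\approx 7\cdot16\sqrt e$, $47\approx 7\cdot4\sqrt e$, $22=11\cdot2$), and taking the minimum over $t\in[p,2]$ produces the displayed bound. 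I expect the only real difficulty to be bookkeeping rather than anything conceptual: propagating the $M$-, $p^*$- and absolute-constant factors through the several $\ell_q$-to-$\ell_p$ conversions and the two uses of $\sqrt A+\sqrt B\le\sqrt{2(A+B)}$, and checking that the coordinatewise minimisation over possibly distinct $h_m$ can be dominated, up to a constant and a harmless power of $M$, by the single scalar rate governed by $\alpha_{\min}$. A minor subtlety is the $\sqrt B$-versus-$B$ slack in the range of $\H_p$, which is immaterial for normalised kernels ($B=1$) and otherwise only inflates constants.
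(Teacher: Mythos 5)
Your proposal follows essentially the same route as the paper: it combines Lemma~\ref{lemma:bartlett}, the fixed-point bound of Lemma~\ref{cor:fixed-point} (via $r^*\le a+2b$), the integral comparison \eqref{eq:algebr_EVs}, the $\ell_q$-to-$\ell_p$ conversions leading to \eqref{eq:fixed_point_comb}, the optimization over $h_m$ with the worst-case substitution $h_{\max},\alpha_{\min},d_{\max}$ giving \eqref{eq:improved_M-rate}, and finally the range bound $\sqrt{B}DM^{1/t^*}$ for $\H_p$, with the monotonicity $\H_p\subseteq\H_t$ supplying the outer $\min_{t\in[p,2]}$. Even your accounting of the numerical constants ($186\approx 7\cdot16\sqrt{e}$, $47\approx 7\cdot 4\sqrt{e}$, $22=11\cdot 2$) matches the paper's derivation, so the argument is correct in the same sense and to the same level of constant bookkeeping as the original.
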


\noindent We see from the above bound that convergence can be almost as slow as $O\big(p^*M^{\frac{1}{p^*}}n^{-\frac{1}{2}}\big)$~ (if at least one $\alpha_m\approx 1$ is small and thus $\alpha_{\min}$ is small) and almost as fast as $O\big(n^{-1}\big)$~ (if $\alpha_m$ is large for all $m$ and thus $\alpha_{\min}$ is large). For example, the latter is the case if all kernels have finite rank and also the convolution kernel is an example of this type. 

Notice that we of course could repeat the above discussion to obtain excess risk bounds for the case $p\geq 2$ as well, but since it is very questionable that this will lead to new insights, it is omitted for simplicity.

\section{Discussion}

In this section we compare the obtained local Rademacher bound with the global one, discuss related work as well as the assumption {\bf (U)}, and give a practical application of the bounds by studying the appropriateness of small/large $p$ in various learning scenarios.

\subsection{Global vs. Local Rademacher Bounds}\label{sec:disc-comp}

In this section, we discuss the rates obtained from the bound in Theorem \ref{theorem:excess} for the excess risk and compare them to the rates obtained using the global Rademacher complexity bound of Corollary~\ref{cor:globrad}. To simplify somewhat the discussion, we assume that
the eigenvalues satisfy $\lambda_j^{(m)}\leq d j^{-\alpha}$ (with $\alpha>1$) for all $m$ and
concentrate on the rates obtained as a function of the parameters $n,\alpha,M,D$ and $p$, while considering other parameters fixed and hiding them in a big-O notation. Using this simplification, the bound of Theorem~\ref{theorem:excess} reads 
\begin{equation}\label{O-excessrisk}
  \forall t\in[p,2]:\quad \P(\loss_{\hat{f}} - \loss_{f^*}) = O\Big(\big(t^*D\big)^{\frac{2}{1+\alpha}}M^{1+\frac{2}{1+\alpha}\big(\frac{1}{t^*}-1\big)}
n^{-\frac{\alpha}{1+\alpha}}\Big).
\end{equation}
On the other hand, the global Rademacher complexity directly leads to a bound on the supremum of the centered empirical process indexed by $\mathcal F$ and thus also provides a bound on the excess risk (see, e.g., \citealp{BouBouLug04}). Therefore, using Corollary~\ref{cor:globrad}, wherein we upper bound the trace of each $J_m$
by the constant $B$ (and subsume it under the O-notation), we have a second bound on the excess risk of the form
\begin{equation}\label{O-excessrisk-glob}
  \forall t\in[p,2]:\quad \P(\loss_{\hat{f}} - \loss_{f^*}) = O\Big(t^*D M^{\frac{1}{t^*}}
n^{-\frac{1}{2}}\Big).
\end{equation}
First consider the case where $p\geq (\log M)^*$, that is, the best choice in~\eqref{O-excessrisk} and~\eqref{O-excessrisk-glob} is $t=p$.
Clearly, if we hold all other parameters fixed and let $n$ grow to infinity, the rate obtained through the local Rademacher analysis is better
since $\alpha>1$. However, it is also of interest to consider what happens when the number of kernels $M$ and the $\ell_p$ ball radius $D$ 
can grow with $n$. In general, we have a bound on the excess risk given by
the minimum of~\eqref{O-excessrisk} and~\eqref{O-excessrisk-glob}; a straightforward calculation shows that the local Rademacher analysis
improves over the global one whenever
\[
\frac{M^{\frac{1}{p}}}{D} = O( \sqrt{n}).
\]
Interestingly, we note that this ``phase transition'' does not depend on $\alpha$ (i.e. the ``complexity'' of the individual kernels), but only on $p$.

If $p\leq (\log M)^*$, the  best choice in~\eqref{O-excessrisk} and~\eqref{O-excessrisk-glob} is $t=(\log M)^*$. In this case taking the minimum of the two bounds reads
\begin{equation}
\label{eq:minpsmall}
  \forall p \leq (\log M)^*:\quad \P(\loss_{\hat{f}} - \loss_{f^*}) \leq O\Big(\min( D (\log M) n^{-\frac{1}{2}}, 
\big(D \log M\big)^{\frac{2}{1+\alpha}}M^{\frac{\alpha-1}{1+\alpha}}
n^{-\frac{\alpha}{1+\alpha}})\Big),
\end{equation}
and the phase transition when the local Rademacher bound improves
over the global one occurs for
\[
\frac{M}{D\log M} = O( \sqrt{n}).
\]
Finally, it is also interesting to observe the behavior of~\eqref{O-excessrisk} and \eqref{O-excessrisk-glob} as $\alpha \rightarrow \infty$. In this case, it means that only one eigenvalue is nonzero for each kernel, that is, each kernel space is one-dimensional. In other words, in this case we are in the case of ``classical'' aggregation of $M$ basis functions, and the minimum of the two bounds reads
\begin{equation}
\label{eq:bound-onedim}
\forall t \in [p,2]:\quad \P(\loss_{\hat{f}} - \loss_{f^*}) 
\leq O\Big(\min(Mn^{-1}, t^*D M^{\frac{1}{t^*}}
n^{-\frac{1}{2}}\Big).
\end{equation}
In this configuration, observe that the local Rademacher bound is $O(M/n)$ and does not depend on $D$, nor $p$, any longer; in fact, it is the same bound that one would obtain for the empirical risk minimization over the space of all linear combinations of the $M$ base functions, without any restriction on the norm of the coefficients---the $\ell_p$-norm constraint becomes void. The global Rademacher bound on the other hand, still depends crucially on the $\ell_p$ norm constraint. This 
situation is to be compared to the sharp analysis of the optimal  convergence rate of convex aggregation of $M$ functions obtained by \cite{Tsy03} in the framework of squared error loss regression, which are shown to be 
\[
O\paren{\min\paren{\frac{M}{n}, \sqrt{\frac{1}{n} \log \paren{\frac{M}{\sqrt{n}}}}}}\,.
\]
This corresponds to the setting studied here with $D=1,p=1$ and $\alpha\rightarrow \infty$, and we see that the bound \eqref{eq:minpsmall} recovers (up to log factors) in this case
this sharp bound and the related phase transition phenomenon.


\subsection{Discussion of Assumption {\bf (U)} }\label{sec:uass}

Assumption {\bf (U)} is arguably quite a strong hypothesis for the validity of our results (needed for $1\leq p \leq 2$), which was not required for the global Rademacher bound. A similar assumption was made in the
recent work of \cite{RasWaiYu10}, where a related MKL algorithm using an $\ell_1$-type penalty is studied, and bounds are derived that
depend on the ``sparsity pattern'' of the Bayes function, i.e. how many coefficients $w^*_m$ are non-zero. If the kernel spaces are one-dimensional, in which case $\ell_1$-penalized MKL reduces qualitatively to standard lasso-type methods, this assumption can be seen as a strong form of the so-called Restricted Isometry Property (RIP), which is known to be necessary to grant the validity of bounds taking into account the sparsity pattern of the Bayes function.

In the present work, our analysis stays deliberately ``agnostic'' (or worst-case) with respect to the true sparsity pattern (in part because experimental evidence seems to point towards the fact that the Bayes function is not strongly sparse), correspondingly it could legitimately be hoped that the RIP condition, or Assumption {\bf (U)}, could be substantially relaxed. Considering again the special case of one-dimensional kernel spaces and the discussion about the qualitatively equivalent case $\alpha\rightarrow\infty$ in the previous section, it can be seen that Assumption {\bf (U)} is indeed 
unnecessary for bound \eqref{eq:bound-onedim} to hold, and more specifically for the rate of $M/n$ obtained through local Rademacher analysis in this case. However, as we discussed, what happens in this specific case is that the local Rademacher analysis becomes oblivious to the $\ell_p$-norm constraint, and we are left with the
standard parametric convergence rate in dimension $M$. In other words, 
with one-dimensional kernel spaces, the two constraints (on the $L^2(P)$-norm of the function and on the $\ell_p$ block-norm of the coefficients) appearing in the definition of local Rademacher complexity are essentially not active simultaneously. Unfortunately, it is clear that this property is not true anymore for kernels of higher complexity (i.e. with a non-trivial decay rate of the eigenvalues). This is a specificity of the kernel setting as 
compared to combinations of a dictionary of $M$ simple functions, and Assumption {\bf (U)} was in effect used to ``align'' the two constraints. To sum up, Assumption {\bf (U)} is used here for a different purpose from that of the RIP in sparsity analyses of $\ell_1$ regularization methods; it is not clear to us at this point
if this assumption is necessary or if uncorrelated variables $\x^{(m)}$ constitutes a ``worst case'' for our analysis.
We did not suceed so far in relinquishing this assumption for $p\leq 2$, and this question remains open.

Up to our knowledge, there is no previous existing analysis of the $\ell_p$-MKL setting for $p>1$; the recent works of \cite{RasWaiYu10}
and \cite{KolYua10} focus on the case $p=1$ and on the sparsity pattern of the Bayes function. A refined analysis
of $\ell_p$-regularized methods in the case of combination of $M$ basis functions was laid out by \cite{Kol09}, also taking into account the possible soft sparsity pattern of the Bayes function.
Extending the ideas underlying the latter analysis into the kernel setting is likely to open interesting developments.

\subsection{Analysis of the Impact of the Norm Parameter $p$ on the Accuracy of $\ell_p$-norm MKL}\label{sec:sparsity}

As outlined in the introduction, there is empirical evidence that the performance of $\ell_p$-norm MKL crucially depends on the choice of the norm parameter $p$ (cf. Figure~\ref{fig:tss} in the introduction). The aim of this section is to relate the theoretical analysis presented here to this empirically observed phenomenon. We believe that this phenomenon can be (at least partly) explained on base of our excess risk bound obtained in the last section. To this end we will analyze the dependency of the excess risk bounds on the chosen norm parameter $p$. We will show that the optimal $p$ depends on the geometrical properties of the learning problem and that in general---depending on the true geometry---any $p$ can be optimal. Since our excess risk bound is only formulated for $p\leq 2$, we will limit the analysis to the range $p\in[1,2]$.

To start with, first note that the choice of $p$ only affects the excess risk bound in the factor (cf. Theorem~\ref{theorem:excess}
and Equation~\eqref{O-excessrisk})
$$ \nu_t:= \min_{t\in[p,2]}\big(D_pt^*\big)^{\frac{2}{1+\alpha}} M^{1+\frac{2}{1+\alpha}\big(\frac{1}{t^*}-1\big)} .$$
So we write the excess risk as $\P(\loss_{\hat{f}} - \loss_{f^*})=O(\nu_t)$ and hide all variables and constants in the O-notation for the whole section (in particular the sample size $n$ is considered a constant for the purposes of the present discussion). 
It might surprise the reader that we consider the term in $D$ in the bound although it seems from the bound that it does not depend on $p$.  This stems from a subtle reason that we have ignored in this analysis so far: $D$ is related to the approximation properties of the class, i.e., its ability to attain the Bayes hypothesis. For a ``fair'' analysis we should take the approximation properties of the class into account. 

\begin{figure}[h]
  \centering
  \subfigure[~$\beta=2$]{\includegraphics[width=0.32\textwidth]{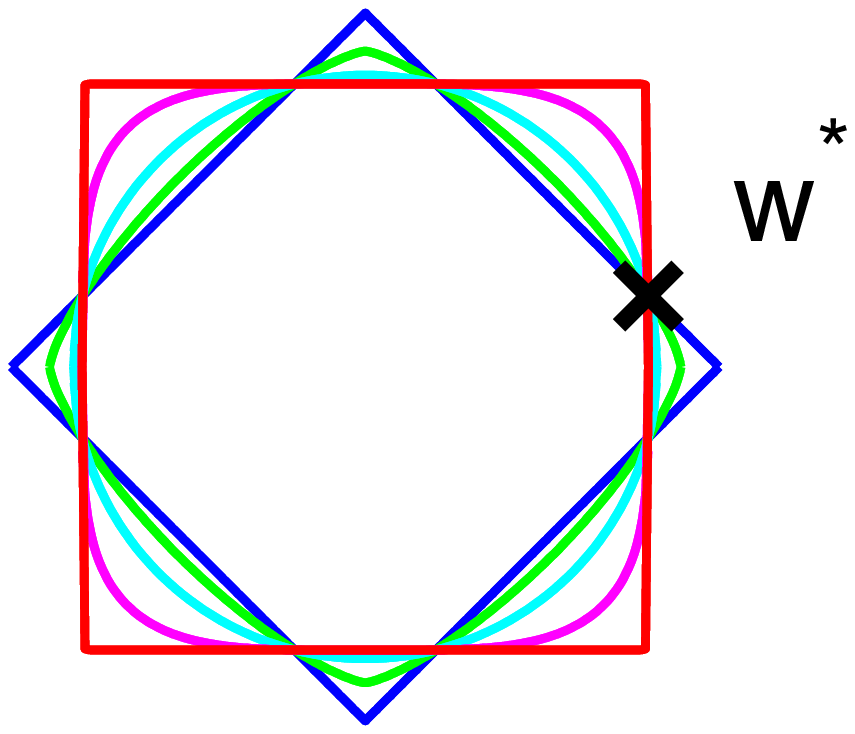}}
  \subfigure[~$\beta=1$]{\includegraphics[width=0.32\textwidth]{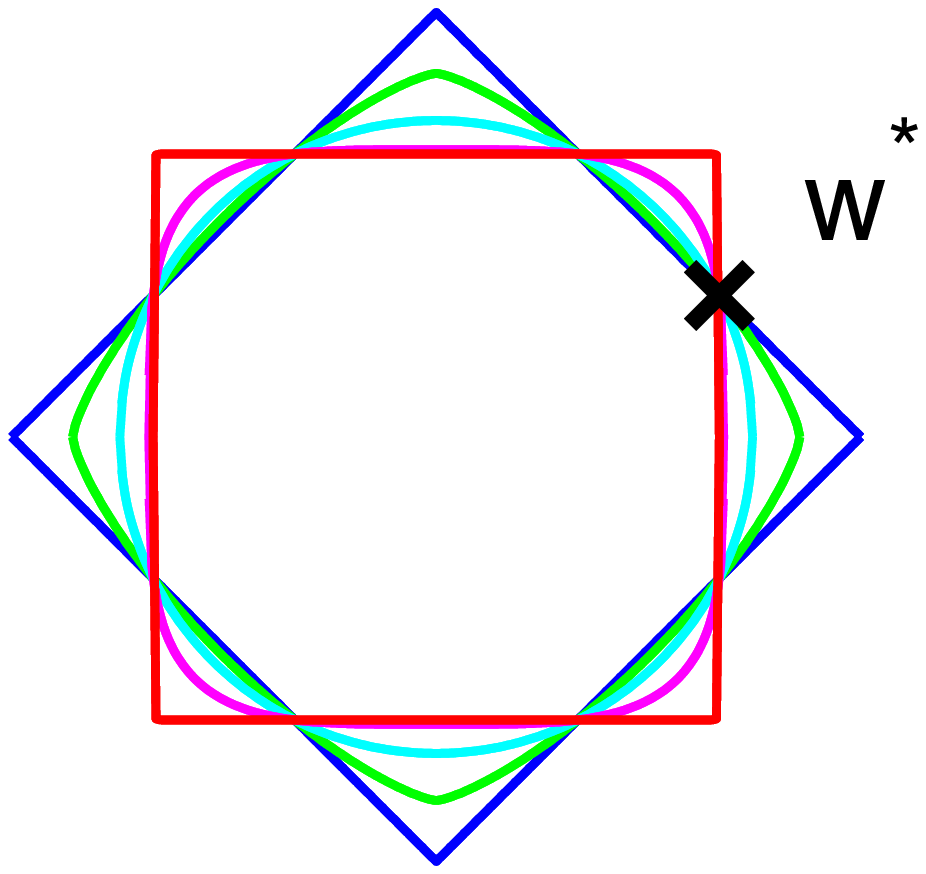}}
  \subfigure[~$\beta=0.5$]{\includegraphics[width=0.32\textwidth]{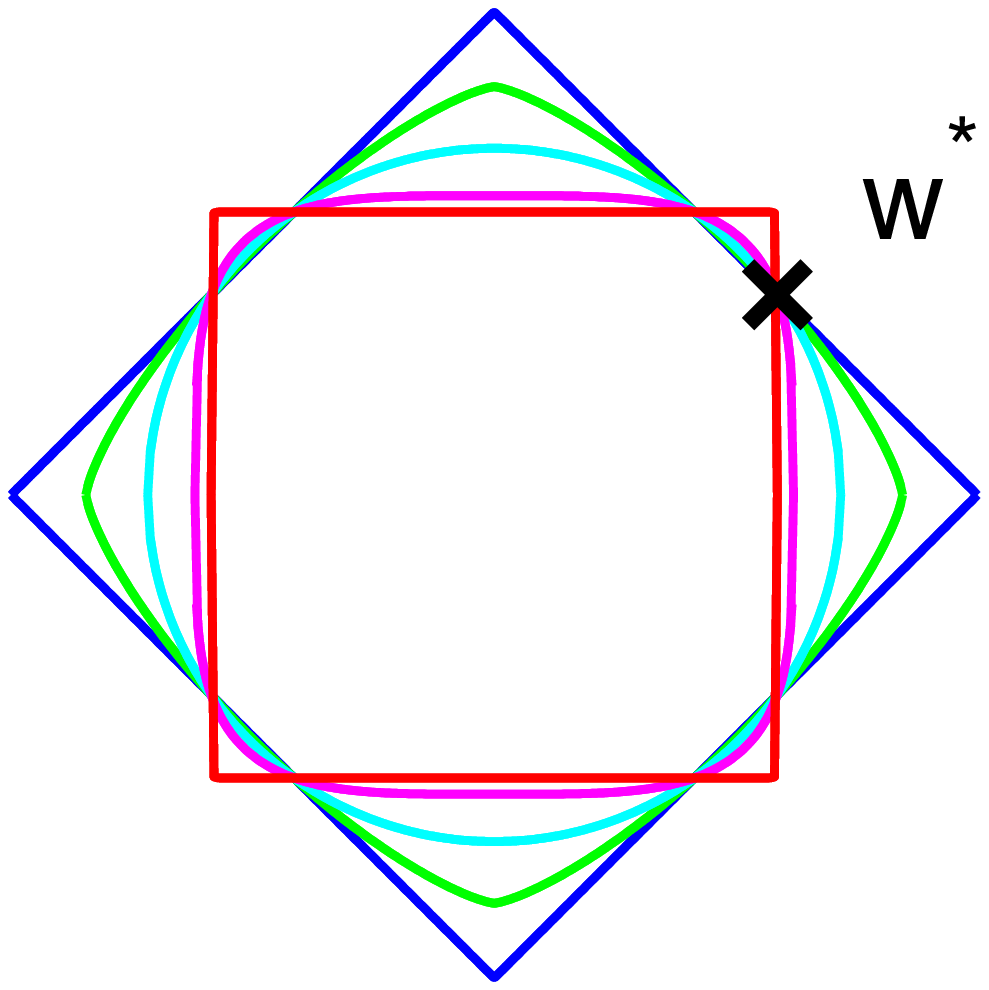}}
  \caption{\label{fig:ex}
      Two-dimensional illustration of the three analyzed learning scenarios, which differ in the soft sparsity of the Bayes hypothesis $\w^*$ ~ (parametrized by $\beta$). {\sc Left:} A soft sparse $\w^*$. {\sc Center:} An intermediate non-sparse $\w^*$.  {\sc Right:} An almost-uniformly non-sparse $\w^*$.}
\end{figure}

To illustrate this, let us assume that the  Bayes hypothesis belongs
to the space $\cH$ and can be represented by $\w^*$; assume further that the block components satisfy $\Vert\w^*_m\Vert_2=m^{-\beta}$, $m=1,\ldots,M$, where $\beta\geq 0$ is a parameter parameterizing the ``soft sparsity'' of the components. For example, the cases $\beta\in\{0.5,1,2\}$ are shown in Figure~\ref{fig:ex} for $M=2$ and assuming that each kernel has rank $1$ (thus being isomorphic to $\mathbb R$). If $n$ is large, the best bias-complexity tradeoff
for a fixed $p$ will correspond to a vanishing bias, so that the best
choice of $D$ will be close to the minimal value such that $\w^* \in \H_{p,D}$, that is,  $D_p=||\w^*||_p$. Plugging in this value for $D_p$,
the bound factor $\nu_p$ becomes
$$ \nu_p:= \Vert\w^*\Vert_p^{\frac{2}{1+\alpha}} \min_{t\in[p,2]}{t^*}^{\frac{2}{1+\alpha}} M^{1+\frac{2}{1+\alpha}\big(\frac{1}{t^*}-1\big)} ~.$$

We can now plot the value $\nu_p$ as a function of $p$ for special choices of $\alpha$, $M$, and $\beta$.
We realized this simulation for $\alpha=2$, $M=1000$, and $\beta\in\{0.5,1,2\}$, which means we generated three learning scenarios with different levels of soft sparsity parametrized by $\beta$. The results are shown in Figure~\ref{fig_p_bound}.  Note that the soft sparsity of  $\w^*$ is increased from the left hand to the right hand side. 
We observe that in the ``soft sparsest'' scenario ($\beta=2$, shown on the left-hand side) the minimum is attained for a quite small $p=1.2$, while for the intermediate case  ($\beta=1$, shown at the center) $p=1.4$ is optimal, and finally in the uniformly non-sparse scenario ($\beta=2$, shown on the right-hand side) the choice of $p=2$ is optimal (although even a higher $p$ could be optimal, but our bound is only valid for $p\in[1,2]$). 

This means that if the true Bayes hypothesis has an intermediately dense representation, our bound gives the strongest generalization guarantees to $\ell_p$-norm MKL using an intermediate choice of $p$. This is also intuitive: if the truth exhibits some soft sparsity but is not strongly sparse, we expect non-sparse MKL to perform better than strongly sparse MKL or the unweighted-sum kernel SVM.

\begin{figure}[t]
  \centering
  \subfigure[~$\beta=2$]{\includegraphics[angle=270,width=0.32\textwidth]{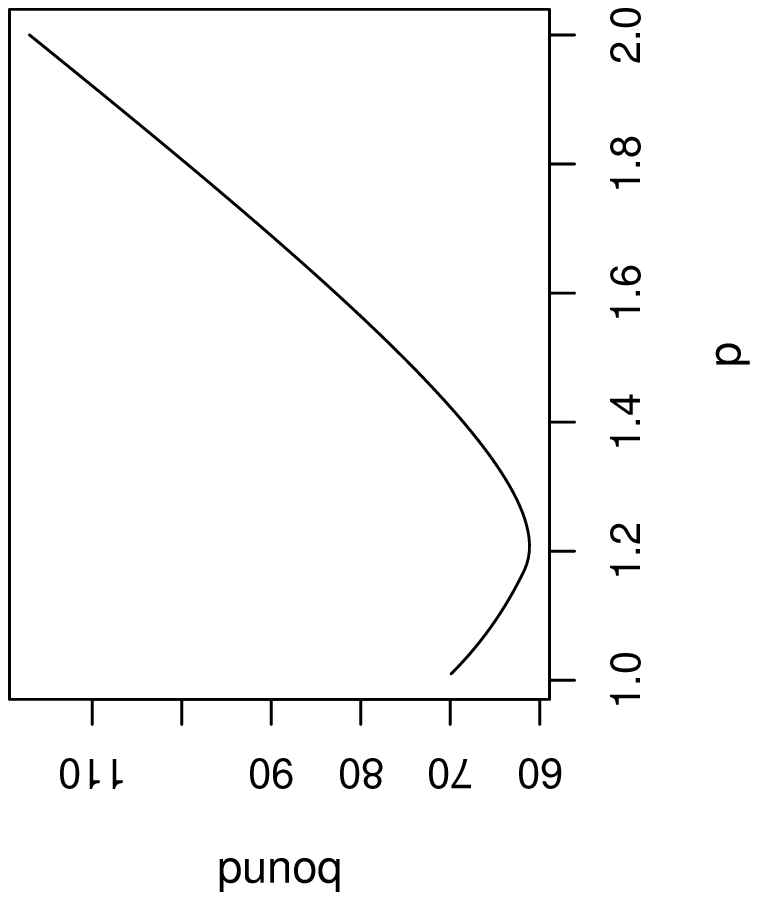}}
  \subfigure[~$\beta=1$]{\includegraphics[angle=270,width=0.32\textwidth]{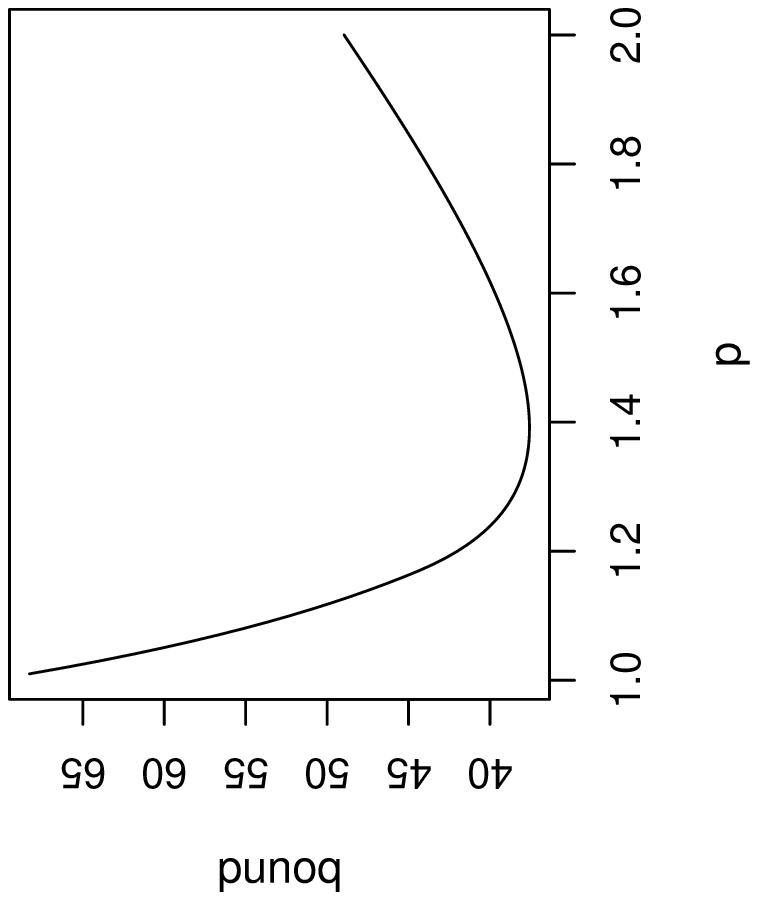}}
  \subfigure[~$\beta=0.5$]{\includegraphics[angle=270,width=0.32\textwidth]{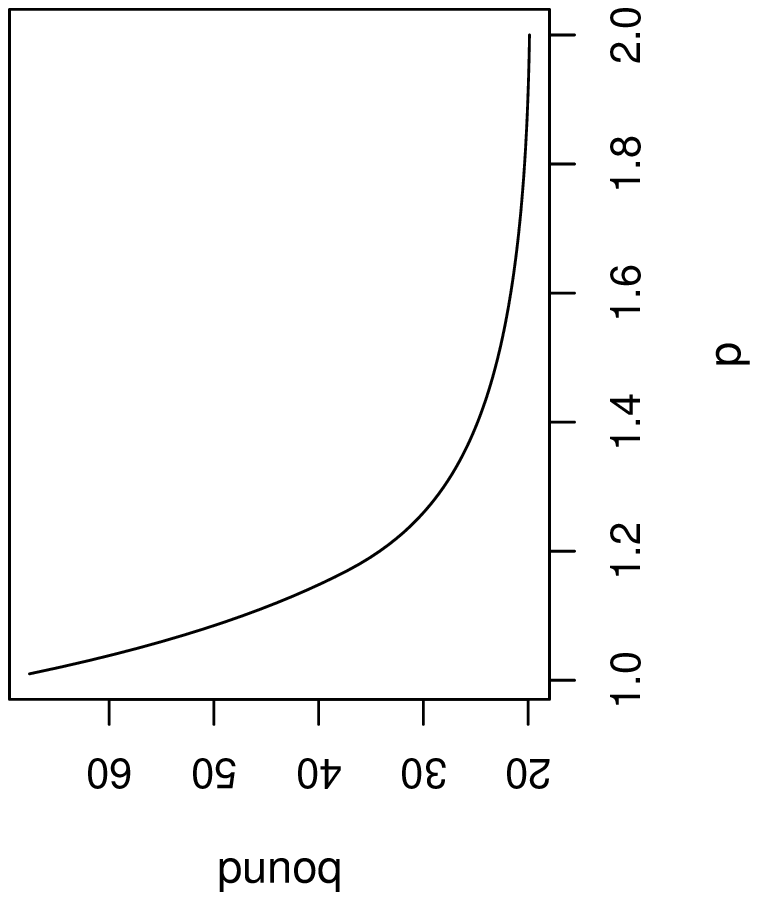}}
  \caption{\label{fig_p_bound}
      Results of the simulation for the three analyzed learning scenarios. The value of the bound factor $\nu_t$ is plotted as a function of $p$. The minimum is attained depending on the true soft sparsity of the Bayes hypothesis $\w^*$  (parametrized by $\beta$). {\sc Left:} An ``almost sparse'' $\w^*$.}
\end{figure}

\section{Conclusion}

We derived a sharp upper bound on the local Rademacher complexity of $\ell_p$-norm multiple kernel learning under the assumption of uncorrelated kernels. We also proved a lower bound that matches the upper one and shows that our result is tight. Using the local Rademacher complexity bound, we derived an excess risk bound that attains the fast rate of $O(n^{-\frac{\alpha}{1+\alpha}})$, where $\alpha$ is the mininum eigenvalue decay rate of the individual kernels.

In a practical case study, we found that the optimal value of that bound depends on the true Bayes-optimal kernel weights. If the true weights exhibit soft sparsity but are not strongly sparse, then the generalization bound is minimized for an intermediate $p$. This is not only intuitive but also supports empirical studies showing that sparse MKL ($p=1$) rarely works in practice, while some intermediate choice of $p$ can improve performance. 

Of course, this connection is only valid if the optimal kernel weights are likely to be non-sparse in practice. Indeed, related research points in that direction. For example, already weak connectivity in a causal graphical model may be sufficient for all variables to be required for optimal predictions, and even the prevalence of sparsity in causal flows is being questioned (e.g., for the social sciences \citealp{Gelman10}, argues that ``There are (almost) no true zeros'').

Finally, we note that seems to be a certain preference for sparse models in the scientific community. However, sparsity by itself should not be considered the ultimate virtue to be strived for---on the contrary: previous MKL research has shown that non-sparse models may improve quite impressively over sparse ones in practical applications. The present analysis supports this by showing  that the reason for this might be traced back to non-sparse MKL attaining better generalization bounds in non-sparse learning scenarios. We remark that this point of view is also supported by related analyses.

For example,  it was shown by \cite{LeePoe08} in a fixed design setup that any sparse estimator (i.e., satisfying the oracle property of correctly predicting the zero 
values of the true target $\w^*$) has a maximal scaled mean squared error (MSMSE) that diverges to $\infty$. This is somewhat suboptimal since, for example, 
least-squares regression has a converging MSMSE. Although this is an asymptotic result, it might also be one of the reasons for finding excellent (nonasymptotic) results in
non-sparse MKL. In another, recent study of \cite{stable}, it was shown that no sparse algorithm can be algorithmically stable. This
is noticeable because algorithmic stability is connected with generalization error \citep{BousquetStab}.

\begin{acks}
We thank Peter L. Bartlett and Klaus-Robert M\"uller for helpful comments on the manuscript.
\end{acks}

\appendix

\section{Lemmata and Proofs}\label{app:lemmata}

\noindent The following result gives a block-structured version of H\"older's inequality \citep[e.g.,][]{SteeleBook}.
\begin{lemma}[Block-structured H\"older inequality]\label{prop-hoelder}
Let $\x=(\x^{(1)},\ldots,\x^{(m)}),~\y=(\y^{(1)},\ldots,\y^{(m)})\in\Hilb=\Hilb_1\times\cdots\times\Hilb_M$. 
Then, for any $p\geq 1$, it holds
$$\langle\x,\y\rangle\leq\Vert\x\Vert_{2,p}\Vert\y\Vert_{2,p^*}~.$$
\end{lemma}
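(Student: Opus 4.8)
The plan is to reduce the block-structured statement to two classical inequalities applied in sequence: the Cauchy--Schwarz inequality inside each Hilbert space $\Hilb_m$, followed by the ordinary scalar H\"older inequality on the resulting sequences of norms. First I would expand the inner product on the product space as $\langle\x,\y\rangle=\sum_{m=1}^M\langle\x^{(m)},\y^{(m)}\rangle_{\Hilb_m}$, which is just the definition of the inner product on $\Hilb=\Hilb_1\times\cdots\times\Hilb_M$. Applying Cauchy--Schwarz in $\Hilb_m$ to each summand gives $\langle\x^{(m)},\y^{(m)}\rangle_{\Hilb_m}\leq\norm{\x^{(m)}}_2\norm{\y^{(m)}}_2$, hence
\[
\langle\x,\y\rangle\;\leq\;\sum_{m=1}^M\norm{\x^{(m)}}_2\,\norm{\y^{(m)}}_2.
\]

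Next I would introduce the two nonnegative vectors $\a=(\norm{\x^{(1)}}_2,\ldots,\norm{\x^{(M)}}_2)$ and $\b=(\norm{\y^{(1)}}_2,\ldots,\norm{\y^{(M)}}_2)$ in $\mathbb{R}^M$, so that the right-hand side above is exactly $\langle\a,\b\rangle$. The scalar H\"older inequality with conjugate exponents $p$ and $p^*$ then yields $\langle\a,\b\rangle\leq\norm{\a}_p\norm{\b}_{p^*}$, and by the definition of the block norm, $\norm{\a}_p=\big(\sum_{m}\norm{\x^{(m)}}_2^p\big)^{1/p}=\norm{\x}_{2,p}$ and likewise $\norm{\b}_{p^*}=\norm{\y}_{2,p^*}$. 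Chaining the two steps gives $\langle\x,\y\rangle\leq\norm{\x}_{2,p}\norm{\y}_{2,p^*}$, as claimed.

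There is essentially no substantive obstacle here; the only points requiring a word of care are the boundary exponents. For $p=1$ (so $p^*=\infty$) and $p=\infty$ (so $p^*=1$), the scalar inequality invoked in the second step is the elementary $\ell_1$--$\ell_\infty$ duality bound $\langle\a,\b\rangle\leq\norm{\a}_1\norm{\b}_\infty$ rather than H\"older proper, but this is standard and can simply be cited alongside the general H\"older statement. One should also note that the inequality is stated (and used in the paper) in the direction $\langle\x,\y\rangle\leq\cdots$ without an absolute value, which is fine since all intermediate bounds are on the real quantity $\langle\x,\y\rangle$ from above; if a two-sided bound on $\abs{\langle\x,\y\rangle}$ were wanted one would apply the same argument to $-\y$ as well.
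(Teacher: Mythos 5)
Your proof is correct and follows exactly the paper's own argument: Cauchy--Schwarz applied blockwise in each $\Hilb_m$, followed by the scalar H\"older inequality on the resulting vectors of norms $\big(\norm{\x^{(m)}}_2\big)_{m}$ and $\big(\norm{\y^{(m)}}_2\big)_{m}$. Your additional remarks on the boundary exponents $p\in\{1,\infty\}$ are a minor refinement the paper does not spell out, but the substance is identical.
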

\smallskip
\begin{proof}
By the Cauchy-Schwarz inequality (C.-S.), we have for all $\x,\y\in\mathcal H$:
\begin{eqnarray*}
  \langle\x,\y\rangle &=& \sum_{m=1}^M \langle\x^{(m)},\y^{(m)}\rangle ~\stackrel{\text{C.-S.}}{\leq}~ \sum_{m=1}^M \Vert\x\Vert_2\Vert\y\Vert_2 \\
     &=& \big\langle(\Vert\x^{(1)}\Vert_2,\ldots,\Vert\x^{(M)}\Vert_2),(\Vert\y^{(1)}\Vert_2,\ldots,\Vert\y^{(M)}\Vert_2)\big\rangle.\\
     &\stackrel{\text{H\"older}}{\leq}& \Vert\x\Vert_{2,p}\Vert\y\Vert_{2,p^*}
\end{eqnarray*}
\end{proof}

\begin{proof}\textbf{of Lemma~\ref{lemma:rosen} (Rosenthal + Young)}
It is clear that the result trivially holds for $\frac{1}{2}\leq p\leq 1$ with $C_q=1$ by Jensen's inequality . In the case $p\geq 1$,
we apply Rosenthal's inequality \citep{Ros70} to the sequence $X_1,\ldots,X_n$ thereby using the optimal constants computed in \cite{Ibra01}, that are, $C_q=2$ ($q\leq2$) and $C_q=\E Z^q$ ($q\geq2$), respectively, where $Z$ is a random variable distributed according to a Poisson law with parameter $\lambda=1$. This yields 
\begin{equation}\label{eq:rosen}
  \E\bigg(\frac{1}{n}\sum_{i=1}^nX_i\bigg)^q \leq C_q \max\left(\frac{1}{n^q}\sum_{i=1}^n\E X_i^q,\left(\frac{1}{n}\sum_{i=1}^n X_i\right)^q\right).
\end{equation}
By using that $X_i\leq B$ holds almost surely, we could readily obtain a bound of the form $\frac{B^q}{n^{q-1}}$ on the first term. However, this is loose and for $q=1$ does not converge to zero when $n\rightarrow\infty$. Therefore, we follow a different approach based on Young's inequality \citep[e.g.][]{SteeleBook}:
\begin{eqnarray*}
  \frac{1}{n^q}\sum_{i=1}^n\E X_i^q &\leq& \bigg(\frac{B}{n}\bigg)^{q-1}\frac{1}{n}\sum_{i=1}^n\E X_i \\
  &\stackrel{\text{Young}}{\leq}& \frac{1}{q^*} \left(\frac{B}{n}\right)^{q^*(q-1)} + \frac{1}{q}\left(\frac{1}{n}\sum_{i=1}^n\E X_i\right)^q\\
  &=& \frac{1}{q^*} \left(\frac{B}{n}\right)^{q} + \frac{1}{q}\left(\frac{1}{n}\sum_{i=1}^n\E X_i\right)^q .
\end{eqnarray*}
It thus follows from \eqref{eq:rosen} that for all $q\geq \frac{1}{2}$
$$\E\bigg(\frac{1}{n}\sum_{i=1}^nX_i\bigg)^q \leq C_q  \left(\Big(\frac{B}{n}\Big)^{q} + \Big(\frac{1}{n}\sum_{i=1}^n\E X_i\Big)^q \right) ,$$
where $C_q$ can be taken as $2$ ($q\leq 2$) and $\E Z^q$ ($q\geq 2$), respectively, where $Z$ is Poisson-distributed. In the subsequent Lemma~\ref{lemma:poisson} we show $\E Z^q\leq (q+e)^q$. Clearly, for $q\geq\frac{1}{2}$ it holds $q+e\leq qe + e q = 2eq$ so that in any case $C_q\leq\max(2,2eq)\leq 2eq$, which concludes the result.
\end{proof}

\noindent We use the following Lemma gives a handle on the $q$-th moment of a Poisson-distributed random variable and is used in the previous Lemma.

\begin{lemma}\label{lemma:poisson}
For the $q$-moment of a  random variable $Z$ distributed according to a Poisson law with parameter $\lambda=1$, the following inequality holds for all $q\geq 1$:
$$ \E Z^q\stackrel{\text{def.}}{=}\frac{1}{e}\sum_{k=0}^\infty \frac{k^q}{k!} \leq (q+e)^q.$$
\end{lemma}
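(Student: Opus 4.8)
The plan is to exploit a simple ``integration by parts'' identity for the Poisson$(1)$ distribution and to convert it into a recursion on the moments. Write $Z$ for a Poisson$(1)$ variable, so that $\E Z^q=\frac1e\sum_{k\ge0}\frac{k^q}{k!}$ by definition. Reindexing the defining series gives, for every $s>0$,
\[
  \E Z^s \;=\; \frac1e\sum_{k\ge1}\frac{k^{s-1}}{(k-1)!} \;=\; \frac1e\sum_{j\ge0}\frac{(j+1)^{s-1}}{j!} \;=\; \E (Z+1)^{s-1}
\]
(equivalently, $\E[Zg(Z)]=\E[g(Z+1)]$). This identity will be used several times, and I would split the argument into the regimes $1\le q\le2$ and $q\ge2$.

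For $1\le q\le2$ the exponent $q-1$ lies in $[0,1]$, so $t\mapsto t^{q-1}$ is both subadditive and concave on $[0,\infty)$; hence $(Z+1)^{q-1}\le Z^{q-1}+1$, and by the shift identity together with Jensen's inequality,
\[
  \E Z^q \;=\; \E (Z+1)^{q-1} \;\le\; \E Z^{q-1}+1 \;\le\; (\E Z)^{q-1}+1 \;=\; 2.
\]
Since $q\ge1$ implies $(q+e)^q\ge q+e\ge 1+e>2$, the claim holds on this range; moreover $\E Z^r\le2$ for $r\in[1,2)$ will serve as the base case of the recursion below.

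For $q\ge2$ the key step is the moment recursion $\E Z^q\le q\,\E Z^{q-1}$. To obtain it, note that $t\mapsto t^{q-1}$ is convex (as $q-1\ge1$), so $(Z+1)^{q-1}-Z^{q-1}\le (q-1)(Z+1)^{q-2}$; taking expectations and applying the shift identity twice (to $\E(Z+1)^{q-1}=\E Z^q$ and to $\E(Z+1)^{q-2}=\E Z^{q-1}$) yields $\E Z^q\le \E Z^{q-1}+(q-1)\E Z^{q-1}=q\,\E Z^{q-1}$. Iterating this $\lfloor q\rfloor-1$ times lowers the exponent into $[1,2)$, where $\E Z^r\le2$; since each of the $\lfloor q\rfloor-1\le q-1$ factors produced is at most $q$, this gives $\E Z^q\le 2q^{q-1}$. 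Finally, Bernoulli's inequality gives $(q+e)^q=q^q(1+e/q)^q\ge (1+e)q^q\ge 2q^{q-1}$ for $q\ge2$, which completes the argument.

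I do not anticipate a genuine obstacle here: the only real idea is to recognize the Poisson ``shift'' identity $\E Z^q=\E(Z+1)^{q-1}$, after which everything reduces to elementary manipulations (convexity/concavity, Jensen, Bernoulli), and the target bound $(q+e)^q$ is loose enough that no delicate estimate is needed. The single point that requires a little care is that the recursion cannot be continued below exponent $2$ --- the negative moments of $Z$ are infinite --- which is precisely why the interval $[1,2]$ has to be treated separately as the base case.
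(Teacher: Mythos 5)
Your proof is correct, and it takes a genuinely different route from the paper. The paper argues directly on the series: it splits $\sum_k k^q/k!$ at $k=q$, bounds the head by $q^q$, and controls the tail via Stirling's formula by $e^{q+1}$, arriving at $\E Z^q \le q^q + e^q \le (q+e)^q$. You instead exploit the Poisson shift (Stein-type) identity $\E Z^s=\E(Z+1)^{s-1}$, which combined with the convexity gradient inequality gives the moment recursion $\E Z^q\le q\,\E Z^{q-1}$ for $q\ge 2$, with the base case $\E Z^r\le 2$ on $[1,2]$ handled by subadditivity and Jensen; all steps check out, including the bookkeeping that the recursion is only invoked at exponents $\ge 2$ (the smallest exponent used is $\{q\}+2$) and the final comparison $2q^{q-1}\le(1+e)q^q\le(q+e)^q$ via Bernoulli. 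What your route buys: it avoids Stirling's approximation entirely (the paper's tail estimate, with its $\tau_k$ bookkeeping, is the fiddliest part of its proof), and your intermediate bound $\E Z^q\le 2q^{q-1}$ is in fact slightly sharper than the paper's $q^q+e^q$ for $q\ge 2$, so it comfortably implies the stated $(q+e)^q$. What the paper's route buys is self-containedness at the level of the explicit series definition, without invoking the shift identity; but since that identity is itself a one-line reindexing (which you display), nothing essential is lost. Either argument serves the downstream purpose in Lemma~\ref{lemma:rosen}, where only the crude consequence $C_q\le(2qe)^q$ is needed.
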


\begin{proof}
We start by decomposing $\E Z^q$ as follows:
\begin{eqnarray}\label{eq:poisson}
  \E^q&=&\frac{1}{e}\left(0+\sum_{k=1}^q\frac{k^q}{k!}+\sum_{k=q+1}^\infty\frac{k^q}{k!}\right) \nonumber\\
  &=& \frac{1}{e}\left(\sum_{k=1}^q\frac{k^{q-1}}{(k-1)!}+\sum_{k=q+1}^\infty\frac{k^q}{k!}\right) \nonumber\\
  &\leq& \frac{1}{e}\left(q^q+\sum_{k=q+1}^\infty\frac{k^q}{k!}\right) \\
\end{eqnarray}
Note that by Stirling's approximation it holds $k!=\sqrt{2\pi}e^{\tau_k}k\left(\frac{k}{e}\right)^q$ with $\frac{1}{12k+1}<\tau_k<\frac{1}{12k}$ for all $q$. Thus
\begin{eqnarray*}
  \sum_{k=q+1}^\infty \frac{k^q}{k!} &=& \sum_{k=q+1}^\infty \frac{1}{\sqrt{2\pi}e^{\tau_k}k}e^kk^{-(k-q)} \\
  &=& \sum_{k=1}^\infty \frac{1}{\sqrt{2\pi}e^{\tau_{k+q}}(k+q)}e^{k+q}k^{-k} \\
  &=&  e^q \sum_{k=1}^\infty\frac{1}{\sqrt{2\pi}e^{\tau_{k+q}}(k+q)} \left(\frac{e}{k}\right)^k\\
  &\stackrel{(*)}{\leq}&  e^q\sum_{k=1}^\infty \frac{1}{\sqrt{2\pi}e^{\tau_{k}}k} \left(\frac{e}{k}\right)^k\\
  &\stackrel{\text{Stirling}}{=}&  e^q\sum_{k=1}^\infty \frac{1}{k!}\\
  &=& e^{q+1}
\end{eqnarray*}
where for $(*)$ note that $e^{\tau_{k}}k\leq e^{\tau_{k+q}}(k+q)$ can be shown by some algebra using $\frac{1}{12k+1}<\tau_k<\frac{1}{12k}$. 
Now by \eqref{eq:poisson}
$$\E Z^q = \frac{1}{e}\left(q^q+e^{q+1}\right) \leq q^q + e^q \leq (q+e)^q,$$
which was to show.
\end{proof}

\begin{lemma}\label{lemma:combin}
For any $\a,\b\in\mathbb R^m_+$ it holds for all $q\geq 1$

$$ \norm{\a}_q+\norm{\b}_q\leq 2^{1-{\frac{1}{q}}}\norm{\a+\b}_q\leq 2\norm{\a+\b}_q .$$
\end{lemma}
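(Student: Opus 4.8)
The plan is to prove the two inequalities separately. The right-hand inequality $2^{1-1/q}\norm{\a+\b}_q \le 2\norm{\a+\b}_q$ is immediate, since $1-\tfrac1q \le 1$ for every $q\ge 1$ and $\norm{\a+\b}_q\ge 0$; so only the left-hand inequality needs work.

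For the left-hand inequality I would combine two elementary scalar facts whose exponents point in opposite directions. First, the concavity of $x\mapsto x^{1/q}$ on $[0,\infty)$ (valid precisely because $q\ge1$) gives, for any $u,v\ge 0$,
\[
u^{1/q}+v^{1/q} \;\le\; 2^{1-1/q}\,(u+v)^{1/q},
\]
which is just the statement that the arithmetic mean of $u^{1/q}$ and $v^{1/q}$ is at most $\big(\tfrac{u+v}{2}\big)^{1/q}$. Applying this with $u=\norm{\a}_q^q=\sum_i a_i^q$ and $v=\norm{\b}_q^q=\sum_i b_i^q$ yields
\[
\norm{\a}_q+\norm{\b}_q \;\le\; 2^{1-1/q}\Big(\textstyle\sum_i a_i^q+\sum_i b_i^q\Big)^{1/q}.
\]
Second, the superadditivity of $t\mapsto t^q$ for $q\ge1$ on $[0,\infty)$ — namely $a^q+b^q\le(a+b)^q$, which follows from $(a+b)^q=(a+b)^{q-1}(a+b)\ge a^{q-1}a+b^{q-1}b$ using $a+b\ge\max(a,b)$ and $q-1\ge0$ — gives, entrywise, $\sum_i a_i^q+\sum_i b_i^q\le\sum_i(a_i+b_i)^q=\norm{\a+\b}_q^q$. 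Substituting this into the previous display and taking $q$-th roots completes the argument.

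There is essentially no obstacle here; the only point worth flagging is that the two ingredients exploit the sign of $q-1$ in opposite ways (concavity of the $1/q$-th power versus superadditivity of the $q$-th power), so it is their combination, rather than either one alone, that produces the sharp constant $2^{1-1/q}$. Nonnegativity of $\a$ and $\b$ is used in the superadditivity step.
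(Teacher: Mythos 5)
Your proposal is correct and follows essentially the same two-step route as the paper: the entrywise superadditivity $a_i^q+b_i^q\le(a_i+b_i)^q$ giving $\norm{\a}_q^q+\norm{\b}_q^q\le\norm{\a+\b}_q^q$, combined with the two-point inequality $u^{1/q}+v^{1/q}\le 2^{1-1/q}(u+v)^{1/q}$ (which the paper phrases as an $\ell_q$-to-$\ell_1$ conversion via H\"older rather than as concavity of $x\mapsto x^{1/q}$, but these are the same fact). No gaps.
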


\begin{proof}
Let $\a=(a_1,\ldots,a_m)$ and $\b=(b_1,\ldots,b_m)$. Because all components of
$\a,\b$ are nonnegative, we have 
$$ \forall i=1,\ldots,m: ~ a_i^q+b_i^q\leq \big(a_i+b_i\big)^q$$
and thus 
\begin{equation}\label{eq:auxAB}
  \norm{\a}_q^q+\norm{\b}_q^q\leq\norm{\a+\b}^q_q .
\end{equation}
We conclude by $\ell_q$-to-$\ell_1$ conversion (see \eqref{eq:conv})
\begin{eqnarray*}
   \norm{\a}_q+\norm{\b}_q &=& \big\Vert\big(\norm{\a}_q,\norm{\b}_q\big)\big\Vert_1 ~ \stackrel{\eqref{eq:conv}}{\leq} ~ 2^{1-{\frac{1}{q}}}
     \big\Vert\big(\norm{\a}_q,\norm{\b}_q\big)\big\Vert_q ~ = ~ 2^{1-{\frac{1}{q}}} \big(\norm{\a}_q^q+\norm{\b}^q_q\big)^{\frac{1}{q}} \\
     &\stackrel{\eqref{eq:auxAB}}{\leq}&  2^{1-{\frac{1}{q}}} \norm{\a+\b}_q, 
\end{eqnarray*}
which completes the proof.
\end{proof}

\bibliography{mkl}

\end{document}